\documentclass{article}

\usepackage[main, final]{neurips_2026}

\usepackage[utf8]{inputenc}
\usepackage[T1]{fontenc}
\usepackage{hyperref}
\usepackage{url}
\usepackage{booktabs}
\usepackage{amsfonts}
\usepackage{amssymb}
\usepackage{amsmath}
\usepackage{amsthm}
\usepackage{mathtools}
\usepackage{nicefrac}
\usepackage{microtype}
\usepackage{xcolor}
\usepackage{algorithm}
\usepackage{algpseudocode}
\usepackage{bm}
\usepackage{multirow}
\usepackage{subcaption}
\usepackage{enumitem}
\usepackage{graphicx}
\usepackage{wrapfig}

\newtheorem{theorem}{Theorem}
\newtheorem{proposition}{Proposition}

\newcommand{\EE}{\mathbb{E}}

\newcommand{\cN}{\mathcal{N}}

\newcommand{\cT}{\mathcal{T}}

\newcommand{\htilde}{\tilde{h}}
\newcommand{\dbar}{\bar{d}}

\title{HoTS: Homophily-Aware Temperature Scaling for Graph Neural Network Calibration}

\author{%
  Inwoo Tae \\
  UNIST \\
  {\small\texttt{vitainu0104@unist.ac.kr}}
  \And
  Yoontae Hwang\thanks{Co-corresponding authors.} \\
  Pusan National University \\
  {\small\texttt{yoontae.hwang@pusan.ac.kr}}
  \And
  Yongjae Lee\footnotemark[1] \\
  UNIST \\
  LinqAlpha \\
  {\small\texttt{yongjaelee@unist.ac.kr}}
}

\begin{document}
\maketitle

\begin{abstract}
For graph node classification, calibrated class probabilities are needed when confidence scores, usually the maximum predicted class probability, are used to rank predictions, defer uncertain nodes to human review, or control risk. Existing post-hoc calibrators either apply one global temperature or use graph-aware modules without a principled structural form. We study how local graph structure should enter node-level calibration. Our first results show that a logit-only temperature rule is insufficient when nodes with identical logits but different local homophily require different optimal temperatures. We then analyze a population-concentration contextual stochastic block model with Gaussian features and a one-layer linear GCN. Under equidistant class means, the Bayes posterior over class-template scores is a temperature-scaled softmax whose inverse-temperature is governed by a homophily-dependent signal strength. In the positive-signal homophilic regime, the resulting temperature decreases approximately inversely with normalized local homophily. This law motivates Homophily-aware Temperature Scaling (HoTS), a simple post-hoc calibrator that assigns each node a positive scalar temperature from entropy-based logit concentration and estimated local homophily. HoTS has three temperature parameters, preserves the predicted class, and learns the strength of the structural correction from calibration data. Across 18 node-classification benchmarks, two GNN backbones, and eight calibration baselines, HoTS achieves the best mean Expected Calibration Error (ECE) of $4.79\%$, the best average rank, and the most reliable confidence ranking in selective classification. Code is available at \url{https://github.com/inu0104/HoTS}.
\end{abstract}

\section{Introduction}
Graph neural networks (GNNs) are widely used for node classification on relational data~\citep{kipf2017semi, hamilton2017inductive}. In graph learning systems, probabilities often guide ranking, triage, human review, and risk scoring, not only class labels. Calibration, the agreement between predictive confidence and empirical correctness, is therefore central to reliability~\citep{naeini2015obtaining}. Post-hoc calibration improves these probabilities without retraining the base model~\citep{niculescu2005predicting}. Temperature scaling is the standard approach, using a single positive scalar to rescale logits while preserving accuracy~\citep{guo2017calibration}, but it assumes that miscalibration is sufficiently uniform across examples. This assumption is fragile on graphs. Through message passing, a node prediction depends on both its own features and the realized composition of its neighborhood. Thus, two nodes with the same logits or confidence can have different probabilities of being correct. A homophilic neighborhood may support a confident prediction, whereas a mixed or heterophilic neighborhood may make the same confidence unreliable~\citep{zhu2020beyond}. Since these regimes can coexist in one graph, a global temperature must compromise across structurally different nodes. Recent graph-aware calibration methods address this issue with node-wise temperatures or graph-dependent logit transformations using auxiliary GNNs, attention, spectral features, or ensemble-style architectures~\citep{wang2021confident, hsu2022makes, zhuang2025gets, xie2024exploring}. They show that graph structure matters, but do not specify the form of the structural correction.

This work asks \emph{how the optimal node-wise temperature should depend on local graph structure, given that neighborhood composition affects the reliability of GNN confidence}. 
We show that logits alone are insufficient. Two nodes with the same logits can require different optimal temperatures when their local homophily provides different class-consistent support. Under this condition, homophily must enter the calibrator to avoid a residual calibration gap. We analyze a population-concentration Contextual Stochastic Block Model (CSBM) with Gaussian node features and a one-layer linear GCN with self-loops. CSBM permits an explicit Bayes posterior calculation while preserving the effect of neighborhood composition on the aggregated signal. Under equidistant class means, the Bayes posterior for class-template scores depends on local homophily through a homophily-dependent signal strength. In the positive-signal regime, stronger homophilic support sharpens the posterior, while weaker support softens it. In the strongly homophilic regime, the Bayes temperature decreases approximately in inverse proportion to normalized local homophily, so well-supported nodes need less correction than structurally ambiguous ones. Because the Bayes result uses oracle homophily, we convert it into a practical rule using estimated local homophily and entropy-based logit concentration. An errors-in-variables argument shows that noisy homophily estimates attenuate the inverse-homophily relationship, motivating us to learn the homophily sensitivity.

These results motivate Homophily-aware Temperature Scaling (HoTS), a node-wise calibrator. HoTS uses a global offset, a correction scale, and a homophily-sensitivity exponent to assign each node a positive scalar temperature from entropy-based logit concentration and estimated local homophily, preserving the predicted class whenever the top logit is unique. Across 18 datasets and eight baselines, HoTS achieves the lowest mean ECE of $4.79\%$ and the most reliable confidence ranking in selective classification. Our contributions are to establish the necessity of homophily for optimal graph calibration, derive a CSBM Bayes calibration law linking temperature to local homophily, translate this law into a simple post-hoc method, and validate it across diverse benchmarks.

\section{Related Work} \label{sec:related}

\paragraph{Post-hoc calibration}
Post-hoc calibration maps model scores to reliable probabilities using a held-out calibration set while keeping the trained predictor fixed~\citep{niculescu2005predicting, naeini2015obtaining}. Classical methods include Platt scaling for binary classification~\citep{platt1999probabilistic} and multiclass extensions via score decomposition or per-class probability estimation~\citep{zadrozny2002transforming}. For deep neural networks, temperature scaling has become a standard baseline because it learns only one scalar temperature and preserves the predicted label whenever the top logit is unique~\citep{guo2017calibration}. Its simplicity makes it robust and easy to deploy, but limits its ability to correct instance-dependent calibration errors. More expressive calibrators address this limitation by learning richer transformations of logits or probabilities. Vector and matrix scaling introduce class-dependent or affine logit transformations~\citep{guo2017calibration}, while beta calibration and Dirichlet calibration capture asymmetric and class-specific distortions~\citep{kull2017beta, kull2019beyond}. Other approaches use ensembles, confidence-dependent temperatures, or entropy-based temperature functions~\citep{zhang2020mix, balanya2024adaptive}. These methods are effective in standard classification settings, but they usually treat examples as independent calibration instances and do not model how relational structure affects confidence reliability. A related issue is whether calibration should preserve the original prediction. General logit transformations can change the predicted class and mix calibration with refinement, whereas order-preserving calibrators avoid this~\citep{rahimi2020intra}. Label-wise reliability reranking also addresses residual correctness differences after calibration while preserving predicted labels and class probabilities~\citep{tae2026post}. HoTS follows this principle by assigning each node a positive scalar temperature, preserving the predicted class while allowing the correction strength to vary with local graph structure.

\paragraph{Calibration for graph neural networks} Calibration for graph neural networks is more challenging because node predictions are coupled through message passing. Empirical studies show that GNN calibration varies substantially across datasets and nodes, and that standard post-hoc methods often fail to account for graph-specific reliability patterns~\citep{teixeira2019graph}. Several works therefore incorporate graph structure into calibration. CaGCN~\citep{wang2021confident} learns a topology-aware calibration function with an auxiliary GCN and exploits the observation that confidence patterns can be homophilic. RBS~\citep{liu2022calibration} groups nodes using a same-class-neighbor ratio and learns bin-wise temperatures. GATS~\citep{hsu2022makes} analyzes sources of GNN miscalibration, including nodewise predictive diversity, distance to training nodes, relative confidence, and neighborhood similarity, and uses an attention-based model for node-wise temperature scaling. Recent work further expands the structural signals used for graph calibration. SimCalib~\citep{tang2024simcalib} uses global and local nodewise similarity signals, ASTS~\citep{xie2024exploring} studies heterophily-aware calibration through spectral information, WATS~\citep{li2025calibrating} uses graph wavelet features for node-specific temperature scaling, and GETS~\citep{zhuang2025gets} combines input and model ensembles through a graph mixture-of-experts framework. Other methods improve calibration at training time or by modifying the graph itself. GCL~\citep{wang2022gcl} introduces a graph calibration loss, Moderate Message Passing~\citep{wang2024moderate} mitigates confidence bias induced by message passing, DCGC~\citep{yang2024calibrating} improves calibration through data-centric edge weighting, and \citet{fang2024improving} emphasize that calibration should preserve discriminative ability between correct and incorrect predictions. These studies establish that graph structure is crucial for calibration. However, most existing methods introduce structure through learned architectures, handcrafted features, spectral filters, wavelets, ensembles, or training objectives. They do not derive what functional relationship should hold between local graph structure and the optimal node-wise temperature. Our work fills this gap by deriving a homophily-dependent temperature law and translating it into a simple calibrator.

\paragraph{Graph structure theory and CSBM analysis} The role of homophily and heterophily in GNNs has been widely studied for node classification. Prior work shows that standard message passing can struggle under heterophily and motivates architectures that better separate ego and neighbor information or exploit higher-order neighborhoods~\citep{zhu2020beyond}. Other studies show that the relationship between homophily and GNN performance is more nuanced than a global homophily measure suggests~\citep{ma2021homophily}. Recent work further emphasizes local homophily and shows that mismatch between local and global structure can create performance differences across node groups~\citep{loveland2024performance}. These findings motivate our focus on node-level structural reliability. The contextual stochastic block model provides a principled framework for analyzing how graph structure and node features jointly affect GNN behavior. Foundational CSBM work characterizes thresholds for recovery in attributed graphs~\citep{lu2023contextual}. Recent analyses study Bayes-optimal inference and the performance of graph convolutional architectures under CSBM assumptions~\citep{duranthon2023optimal, dalle2024optimal}. These works clarify how structure, features, and message passing affect classification accuracy and representation quality. Our work is complementary to this theoretical literature. Existing CSBM analyses mainly study detection, classification, separability, or representation behavior, whereas calibration asks whether predicted probabilities match true correctness probabilities. To our knowledge, no prior work derives the Bayes-optimal temperature as an explicit function of local homophily under a CSBM. HoTS connects this CSBM Bayes posterior calculation to a practical node-wise temperature scaling rule.
\section{Derivation of Node-wise Temperature}
\paragraph{Assumptions.}
The theory is developed under a uniform-prior CSBM with Gaussian node features and a one-layer linear GCN with self-loops. For the exact posterior calculation, we use a population-concentration version of the model: conditional on the realized oracle local homophily $h_i$, the degree and neighbor class composition are fixed at their population values, so finite degree and composition fluctuations are not included in the posterior calculation. The class means are assumed to be equidistant, so that after centering they form a regular simplex on their affine span. Throughout the theoretical sections, $h_i$ denotes the oracle local homophily defined using true labels. It is used only to characterize the Bayes signal and is not assumed to be available at test time. The implemented method uses an estimated homophily $\hat h_i$ from an auxiliary GCN supervised using observed labels. The formal model and posterior calculation are in Appendix~\ref{app:formal_csbm} and Appendix~\ref{app:bayes_proof}.


\paragraph{Theoretical Motivation} \label{sec:theory} The theory has three roles. \textbf{First}, it shows that homophily is not merely a useful heuristic: if the Bayes-optimal temperature varies with $h$ after conditioning on logits, then excluding $h$ causes a positive calibration gap. \textbf{Second}, under the population-concentration CSBM, it computes the Bayes posterior whose inverse-temperature is controlled by a homophily-dependent signal strength. \textbf{Third}, it turns this group-level Bayes signal into the HoTS form by adding a node-wise entropy proxy and a learnable exponent that accounts for noisy homophily estimation.

\paragraph{Why Homophily Is Necessary for GNN Calibration} Let $\tau=1/T$ denote inverse-temperature. For fixed logits $z$ and label $y$, define the temperature-scaled cross-entropy loss as $\ell(\tau;z,y)=-\log\mathrm{softmax}(\tau z)_y$. Let $\cT_z$ be the class of positive inverse-temperature calibrators depending only on $z$, and let $\cT_{z,h}$ be the corresponding class of calibrators that may depend on both $z$ and $h$.

\begin{proposition}[Necessity of homophily]\label{prop:necessity}
Let $\tau^*(z,h)$ be the CE-optimal positive inverse-temperature conditional on $(Z=z,H=h)$, and let $\tau^\star(z)$ be the CE-optimal positive inverse-temperature conditional only on $Z=z$. Under the local regularity conditions stated in Appendix~\ref{app:necessity_proof}, if
\begin{equation}
    \operatorname{Var}_H\!\left(\tau^*(z,H)\mid Z=z\right)>0    
\end{equation}

on a set of positive measure, then
\begin{equation}
    \inf_{\tau\in\cT_{z,h}} \EE[\ell(\tau(Z,H);Z,Y)] <  \inf_{\tau\in\cT_z} \EE[\ell(\tau(Z);Z,Y)] .    
\end{equation}
Moreover, in the local quadratic regime, this improvement is proportional to the curvature-weighted conditional variance of $\tau^*(Z,H)$ given $Z$.
\end{proposition}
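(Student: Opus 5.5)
The strategy is to reduce both infima to pointwise minimizations of conditional risks and then compare them fiber by fiber in $z$. Define the conditional risk $R(\tau;z,h)=\EE[\ell(\tau;Z,Y)\mid Z=z,H=h]$, which is convex in $\tau$ because $\tau\mapsto -\log\mathrm{softmax}(\tau z)_y$ is a log-sum-exp minus a linear function. By the tower property and measurable selection, $\inf_{\tau\in\cT_{z,h}}\EE[\ell]=\EE[R(\tau^*(Z,H);Z,H)]$ and $\inf_{\tau\in\cT_z}\EE[\ell]=\EE\big[\bar R(\tau^\star(Z);Z)\big]$, where $\bar R(\tau;z)=\EE_H[R(\tau;z,H)\mid Z=z]$. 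The appendix regularity conditions supply existence, positivity and measurability of the minimizers. Because every $z$-only calibrator is also a $(z,h)$ calibrator, the weak inequality is immediate. The proof therefore reduces to showing strictness on a positive-measure set of $z$.

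For the strict part, fix $z$ in the set where $\operatorname{Var}_H(\tau^*(z,H)\mid Z=z)>0$. Since $R(\cdot;z,h)$ is minimized at $\tau^*(z,h)$, the gap at $z$ equals
\begin{equation*}
\Delta(z)=\EE_H\big[R(\tau^\star(z);z,H)-R(\tau^*(z,H);z,H)\mid Z=z\big]\ge 0.
\end{equation*}
If $\Delta(z)=0$, then $\tau^\star(z)$ must minimize $R(\cdot;z,h)$ for almost every $h$. The regularity assumption of strict convexity, equivalently a unique minimizer, then forces $\tau^*(z,H)=\tau^\star(z)$ almost surely, which contradicts the positive variance. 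Strict convexity holds whenever $z$ is not constant across classes and $Y$ is not degenerate: the second derivative in $\tau$ is the variance of $z_Y$ under $\mathrm{softmax}(\tau z)$, which is positive. Integrating $\Delta(z)>0$ over a positive-measure set then gives the strict inequality.

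For the quadratic statement, take a second-order Taylor expansion of $R(\cdot;z,h)$ around $\tau^*(z,h)$. The first-order term vanishes at the interior optimum, so $R(\tau;z,h)\approx R(\tau^*;z,h)+\tfrac12\kappa(z,h)(\tau-\tau^*(z,h))^2$ with curvature $\kappa(z,h)=\partial_\tau^2R(\tau^*(z,h);z,h)$, which is a softmax variance of $z$. Minimizing the averaged quadratic over $\tau$ yields the curvature-weighted mean $\tau^\star(z)\approx\EE[\kappa\tau^*\mid z]/\EE[\kappa\mid z]$. Substituting this back gives
\begin{equation*}
\Delta(z)\approx\tfrac12\,\EE\big[\kappa(z,H)\big(\tau^*(z,H)-\tau^\star(z)\big)^2\mid Z=z\big],
\end{equation*}
which is half the $\kappa$-weighted conditional variance of $\tau^*(z,H)$. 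When $\kappa$ is locally constant in $h$, this reduces to $\tfrac12\kappa(z)\operatorname{Var}_H(\tau^*(z,H)\mid z)$.

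I expect the main obstacle to be making the local quadratic regime rigorous. The Taylor remainder has to be controlled uniformly, which I would do with a bound on the third derivative $\partial_\tau^3R$. That bound holds because all cumulants of $z_Y$ under the softmax are bounded by powers of $\max_k z_k-\min_k z_k$. The appendix conditions would need to confine $\tau^*(z,H)$ to a small neighborhood, so that the error is $O(\EE|\tau^*-\tau^\star|^3)$ and therefore of lower order than the variance term.
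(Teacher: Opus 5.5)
Your proposal is correct and follows essentially the same route as the paper: fiberwise conditional risks, then strict convexity of the CE in $\tau$ (second derivative $\operatorname{Var}_{p_\tau}(z_k)$) together with uniqueness of the conditional minimizers to force a strictly positive gap whenever $\tau^*(z,H)$ is non-constant given $Z=z$, then a Taylor expansion around $\tau^*(z,h)$ with bounded third derivatives, which yields the curvature-weighted mean $\bar\tau_\kappa(z)$ and a leading-order gap of $\tfrac12\EE[\kappa(Z,H)(\tau^*(Z,H)-\bar\tau_\kappa(Z))^2]$. One small correction: strict convexity needs only that $z$ is not a multiple of $\mathbf 1$, because the curvature does not depend on $y$; non-degeneracy of $Y$ is instead what guarantees that a finite interior minimizer exists.
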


\begin{proof}
The proof is given in Appendix~\ref{app:necessity_proof}.
\end{proof}

Intuitively, if two nodes have the same logits but different homophily values, and the Bayes-optimal temperature differs between them, then no logit-only calibrator can be optimal for both. Homophily is therefore necessary whenever it explains residual variation in the optimal positive temperature after conditioning on logits. Empirically, calibrators incorporating estimated local homophily (such as HoTS) outperform logit-only calibrators (TS, ETS) in aggregate across the 18 benchmarks (Section~\ref{exp:main}), which provides direct evidence for this gap. Proposition~\ref{prop:necessity} reduces the question to whether the Bayes-optimal positive temperature genuinely varies with $h$ within fixed-logit groups. The next result verifies the homophily dependence of the Bayes inverse-temperature for $Z=S$, where $S$ denotes the CSBM class-template score vector, on the positive-signal region; it does not claim the same exact posterior for arbitrary learned logits. It computes the corresponding signed Bayes inverse-temperature, and ordinary positive-temperature statements are restricted to the positive-signal region $\mathcal H_+$. Thus Proposition~\ref{prop:necessity} applies here with the logit variable specialized to the class-template score vector $Z=S$, provided that $H$ remains non-degenerate within fixed-$S$ groups, as formalized in the strictness argument in Appendix~\ref{app:bayes_proof}.

\paragraph{CSBM Bayes Posterior and Homophily-Dependent Temperature} Let $m_c=\mu_c-\bar\mu$ and $\rho_0=\|m_c\|$. Under the equidistant-means assumption stated in Appendix~\ref{app:formal_csbm}, the centered class means form a regular simplex on their affine span. To avoid confusion with the learned GCN logits, we denote the theoretical centered class-template scores by $s_c=\rho_0^{-1}m_c^\top(\bar x_i-\bar\mu)$. The exact posterior result below is a statement about these class-template scores, not about an arbitrary learned linear head $W$; learned logits are connected to the theory only through the later surrogate HoTS form. We write the normalized homophily as $\tilde h=\frac{Kh-1}{K-1},$ so that $\tilde h=0$ corresponds to a class-random neighborhood and $\tilde h>0$ corresponds to homophily above the class-random baseline. Finally, let $v=\sigma^2/(\dbar+1)$ and $\Gamma(h)=\rho_0(1+\dbar\tilde h)/(\dbar+1)$.

\begin{theorem}[CSBM Bayes posterior for class-template scores]\label{thm:signal}
Under the population-concentration CSBM described above and formalized in Appendix~\ref{app:formal_csbm}, the Bayes posterior conditional on the realized homophily $H_i=h$ and the class-template score vector $s$ is 
\begin{equation}\label{eq:bayes_posterior}
    P(Y_i=c\mid s,H_i=h)
    =
    \mathrm{softmax}\!\left(
        \frac{\Gamma(h)}{v}s
    \right)_c .
\end{equation}
Hence, for these class-template scores, the signed Bayes inverse-temperature is $\tau_{\mathrm{Bayes}}(h)=\Gamma(h)/v$.
\end{theorem}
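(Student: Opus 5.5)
The plan is to compute the class-conditional law of the aggregated feature $\bar x_i$ explicitly and then apply Bayes' rule with the uniform prior. Under the one-layer linear GCN with self-loops, $\bar x_i=(\dbar+1)^{-1}\bigl(x_i+\sum_{j\in\cN(i)}x_j\bigr)$. Conditional on $Y_i=c$ and $H_i=h$, the population-concentration assumption fixes the degree at $\dbar$. It also fixes the neighbor composition: $\dbar h$ neighbors lie in class $c$, and $\dbar(1-h)/(K-1)$ neighbors lie in each other class. Since the features are independent Gaussians $\cN(\mu_{y},\sigma^2 I)$, $\bar x_i\mid Y_i=c$ is Gaussian with covariance $\sigma^2(\dbar+1)/(\dbar+1)^2\,I=vI$. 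Its mean is
\[
\frac{(1+\dbar h)\mu_c+\frac{\dbar(1-h)}{K-1}\sum_{c'\neq c}\mu_{c'}}{\dbar+1}.
\]
Writing $\sum_{c'\neq c}\mu_{c'}=K\bar\mu-\mu_c$ and centering, the mean of $\bar x_i-\bar\mu$ becomes $\alpha(h)m_c$, where
\[
\alpha(h)=\frac{1+\dbar h-\dbar(1-h)/(K-1)}{\dbar+1}=\frac{1+\dbar\tilde h}{\dbar+1}.
\]
This algebraic identity is the first key step.

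Next, I apply Bayes' rule. With a uniform prior and a common isotropic covariance,
\[
\log P(Y_i=c\mid\bar x_i,h)=\frac{\alpha(h)}{v}\,m_c^\top(\bar x_i-\bar\mu)-\frac{\alpha(h)^2}{2v}\|m_c\|^2+\text{const}.
\]
The equidistant-means assumption gives $\|m_c\|=\rho_0$ for all $c$, so the quadratic term is class-independent and cancels in the softmax. Substituting $m_c^\top(\bar x_i-\bar\mu)=\rho_0 s_c$ yields $\mathrm{softmax}\bigl(\alpha(h)\rho_0 s/v\bigr)=\mathrm{softmax}\bigl(\Gamma(h)s/v\bigr)$, conditional on the full feature $\bar x_i$.

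The remaining step, which I expect to be the main obstacle, is to pass from conditioning on $\bar x_i$ to conditioning only on the score vector $s$. I will argue that $s$ is a sufficient statistic for $Y_i$. The likelihood ratio between any two classes depends on $\bar x_i$ only through the differences $m_c^\top(\bar x_i-\bar\mu)$, that is, through $s$. Hence $P(Y_i=c\mid\bar x_i,h)$ is $\sigma(s)$-measurable, and by the tower property it equals $P(Y_i=c\mid s,h)$.

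One can also check the result directly. The projections of $\bar x_i-\bar\mu$ onto the span of $\{m_c\}$ carry all the class information, and the orthogonal component is independent of $Y_i$. Because the simplex Gram matrix satisfies $m_c^\top m_{c'}=\rho_0^2\bigl(\mathbb 1[c=c']-\tfrac{1-\mathbb 1[c=c']}{K-1}\bigr)$ and the $m_c$ sum to zero, $s$ lies in a $(K-1)$-dimensional subspace. Thus the Gaussian density of $s$ is degenerate, and the argument should be phrased through sufficiency rather than an explicit density.

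Finally, I note that $\Gamma(h)$ may be negative when $\tilde h<-1/\dbar$. This is why the resulting inverse-temperature $\tau_{\mathrm{Bayes}}(h)=\Gamma(h)/v$ is called signed, with positive temperatures only on $\mathcal H_+$.
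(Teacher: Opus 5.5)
Your proposal is correct and takes the same route as the paper's proof. You compute the conditional law $\cN(\bar\mu+a(h)m_c,\,vI)$ of the aggregate, cancel the class-independent terms (using $\|m_c\|=\rho_0$) in Bayes' rule with the uniform prior, and pass from $\bar x_i$ to $s$ because the likelihood ratio depends on $\bar x_i$ only through the template projections; beyond the paper, you spell out the centering algebra giving $(1+\dbar\tilde h)/(\dbar+1)$ and justify the reduction to $s$ via the tower property.
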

\begin{proof}
The proof is given in Appendix~\ref{app:bayes_proof}.
\end{proof}
Theorem~\ref{thm:signal} says that, under the population-concentration CSBM and for the class-template logits, local homophily changes the Bayes posterior only through the scalar signal strength $\Gamma(h)$. Consequently, for non-constant class-template scores $s$ and $h\in\mathcal H_+$, the CE-optimal positive inverse-temperature conditional on $(s,h)$ is $\Gamma(h)/v$. When $\Gamma(h)>0$, this signed Bayes inverse-temperature corresponds to an ordinary positive temperature. When $\Gamma(h)<0$, the Bayes rule reverses the template-logit ordering and cannot be represented by positive temperature scaling. Since $\Gamma(h)$ is increasing in $h$, more homophilic neighborhoods induce a larger Bayes inverse-temperature, and therefore a sharper posterior, whenever the signal is positive. On the positive-signal region $\mathcal H_+=\{h:\Gamma(h)>0\}$, the corresponding positive Bayes temperature is
\begin{equation}\label{eq:group_temperature}
    T_{\mathrm{Bayes}}(h)
    =
    \frac{v}{\Gamma(h)}
    =
    \frac{\sigma^2}{(1+\dbar\htilde)\rho_0}.
\end{equation}
In the large-$\dbar\htilde$ homophilic regime, $\Gamma(h)\approx\Gamma_0\htilde$, where $\Gamma_0=\rho_0\dbar/(\dbar+1)$. Thus $T_{\mathrm{Bayes}}(h)\propto\htilde^{-1}$, which motivates the homophily-dependent denominator of the HoTS temperature defined in Section~\ref{sec:method}.

\paragraph{From Bayes Temperature to a Node-wise Surrogate} \label{sec:method_form} The Bayes temperature in \eqref{eq:group_temperature} is group-level: all nodes with the same homophily receive the same temperature. To translate this group-level law into a node-wise rule, we use a reliability-matching surrogate. Consider binary classification, where node $i$ has logits $z_i=(z_{i1},z_{i2})$. When the predicted class is $1$, $z_{i1}>z_{i2}$, so the positive logit gap $\delta_i=z_{i1}-z_{i2}$ is positive and $\sigma(\delta_i)\in(1/2,1)$ is the model's confidence in the predicted class. Temperature scaling preserves this range: for any $T_i>0$, $\sigma(\delta_i/T_i)\in(1/2,1)$ as well. Calibration asks that this confidence equal the true correctness probability for node $i$. Since the per-node correctness probability is unobservable, we replace it with a group-level rate. Theorem~\ref{thm:signal} shows that the Bayes temperature depends on local homophily, so we group nodes by $h$ and use the homophily-conditional correctness target $q(h_i)=\Pr(\hat y_i=y_i\mid H_i=h_i)$. Matching the calibrated confidence to this target, $\sigma(\delta_i/T_i)=q(h_i)$, and applying $\operatorname{logit}=\sigma^{-1}$ to both sides gives
\begin{equation}\label{eq:match_T}
    T_i^{\mathrm{match}} = \frac{\delta_i}{\operatorname{logit}(q(h_i))}.
\end{equation}
This is not an exact pointwise CE optimum under the full CSBM posterior; it is a practical rule that separates node confidence (through $\delta_i$) from structural reliability (through $q(h_i)$). Equation~\eqref{eq:match_T} is well-defined and yields a positive temperature only when $q(h_i)>1/2$; this corresponds to the positive-signal regime $h\in\mathcal H_+$ of Theorem~\ref{thm:signal}, in which the Bayes posterior is sharper than chance. The implemented HoTS rule extends the form of this surrogate to all nodes via the absolute-value denominator $(|\hat{\tilde{h}}_i| + \epsilon)^\alpha$ (with $\hat{\tilde{h}}_i$ the normalized estimated homophily and $\epsilon > 0$ a small stabilizer) for stability when the estimated structural signal may be weak or negative, rather than as an exact pointwise approximation outside $H_+$.

For multi-class logits, we replace the binary gap by the entropy-based concentration proxy $\sqrt{2K \log K (1-e_i)}$, where $e_i$ is the normalized predictive entropy, obtained from the local softmax-entropy expansion around the uniform distribution. Under the binary centered class-template CSBM, the group correctness probability satisfies $q(h)=\Phi(\Gamma(h)/\sqrt v)$. In the moderate-margin and large-$\dbar\htilde$ homophilic regime, this gives $\operatorname{logit}(q(h))\approx\lambda\htilde$ for a positive constant $\lambda$. The entropy expansion and the binary reliability calculation are given in Appendix~\ref{app:entropy_expansions} and Appendix~\ref{app:binary_reliability}.

Combining these approximations yields, for $\htilde_i>0$ with $\dbar\htilde_i\gg 1$, the positive-temperature oracle ratio form
\begin{equation}\label{eq:oracle_ratio}
    T_i^{\mathrm{match}} \approx c\, \frac{\sqrt{2K\log K(1-e_i)}}{\htilde_i}, \qquad c>0,
\end{equation}
valid in the moderate-margin, large-$\dbar\htilde$ part of the positive-signal homophilic regime. This oracle ratio is not intended for $\htilde_i\le 0$; the absolute-value denominator in the implemented HoTS rule is a stable positive-temperature parametrization rather than an exact signed Bayes posterior approximation. Thus, temperature increases with raw logit concentration and decreases with structural reliability in this approximation. The practical form replaces the oracle $\htilde_i$ by the estimated $\hat{\htilde}_i$ and introduces a learnable exponent $\alpha$ on the homophily denominator $(|\hat{\htilde}_i|+\varepsilon)^\alpha$ to account for estimation noise and deviations from the ideal CSBM regime.

\paragraph{Why the Exponent Is Learnable} \label{sec:learnable} The large-$\dbar\htilde$ approximation to the CSBM Bayes temperature gives the inverse-homophily law $T\propto\htilde^{-1}$, corresponding to the oracle exponent $\alpha=1$ in that regime. In practice, however, the oracle $\htilde$ is replaced by a noisy estimate $\hat{\htilde}$. This weakens the observed log-log relationship between temperature and homophily, so the exponent should be learned rather than fixed.

\begin{proposition}[Exponent attenuation]\label{prop:alpha}
Consider the regime where the positive-temperature oracle is approximated by an inverse-magnitude law. Let $u=\log(|\htilde|+\varepsilon)$ and let the observed log-homophily be $\hat u=u+\eta$, where $\eta$ is zero-mean noise independent of $u$ with variance $\tau^2$. If the oracle relation is $\log T^*=a-u$, but we fit the surrogate $\log T_\alpha=a_\alpha-\alpha\hat u$ by population least squares, then

\begin{equation}\label{eq:alpha_attenuation}
    \alpha^*
    =
    \frac{\operatorname{Var}(u)}
    {\operatorname{Var}(u)+\tau^2}
    \leq 1 .
\end{equation}

The inequality is strict whenever $\tau^2>0$ and $\operatorname{Var}(u)>0$.
\end{proposition}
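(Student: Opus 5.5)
This is the classical errors-in-variables attenuation argument, so I would treat it as population ordinary least squares of the response $\log T^*$ on the single regressor $\hat u$, with an intercept.

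\textbf{Step 1 (OLS normal equations).} Minimizing $\EE[(\log T^* - a_\alpha + \alpha\hat u)^2]$ over $(a_\alpha,\alpha)$ is a convex quadratic problem. Its first-order conditions give the slope of $\log T^*$ on $\hat u$, namely $-\alpha^* = \operatorname{Cov}(\log T^*,\hat u)/\operatorname{Var}(\hat u)$, together with the intercept $a_\alpha = \EE[\log T^*] + \alpha^*\EE[\hat u]$. This requires $\operatorname{Var}(\hat u)>0$, and I would assume $u$ and $\eta$ have finite second moments. The degenerate case $\operatorname{Var}(u)=\tau^2=0$ must be excluded (or the slope taken as $0$ by convention). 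The intercept plays no role in the claim.

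\textbf{Step 2 (moments).} Substitute $\log T^* = a - u$ and $\hat u = u + \eta$. Because $\eta$ is independent of $u$ and has mean zero, $\operatorname{Cov}(u,\eta)=0$. This gives
\begin{align*}
\operatorname{Cov}(\log T^*,\hat u) &= -\operatorname{Cov}(u,u+\eta) = -\operatorname{Var}(u),\\
\operatorname{Var}(\hat u) &= \operatorname{Var}(u)+\tau^2 .
\end{align*}
Hence $\alpha^* = \operatorname{Var}(u)/(\operatorname{Var}(u)+\tau^2)$, which is \eqref{eq:alpha_attenuation}.

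\textbf{Step 3 (inequality).} Since $\tau^2\ge 0$, the numerator is at most the denominator, so $\alpha^*\le 1$. It is also nonnegative, so $\alpha^*\in[0,1]$. If $\tau^2>0$ and $\operatorname{Var}(u)>0$, the denominator strictly exceeds the positive numerator, so $\alpha^*<1$.

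There is no real obstacle. The only point needing care is to state the integrability and nondegeneracy assumptions that make the population least-squares minimizer unique. The independence assumption is used only to kill the cross-covariance, so uncorrelatedness of $u$ and $\eta$ would suffice.
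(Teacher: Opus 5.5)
Your proposal is correct and follows the paper's proof: you take the population OLS slope $\operatorname{Cov}(\hat u,\log T^*)/\operatorname{Var}(\hat u)$, use independence to obtain $-\operatorname{Var}(u)$ and $\operatorname{Var}(u)+\tau^2$, and read off $\alpha^*$ by matching that slope to $-\alpha$. Your extra remarks, that $\operatorname{Var}(\hat u)>0$ is needed and that uncorrelatedness of $u$ and $\eta$ would suffice, are valid refinements that the paper leaves implicit.
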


\begin{proof}
The proof is given in Appendix~\ref{app:alpha_proof}.
\end{proof}

Proposition~\ref{prop:alpha} is exact for the log-space least-squares surrogate, not for the full CE objective used to train HoTS.
On CSBM, the fitted $\alpha$ decreases from $1.009$ ($\tau=0$, oracle recovery) to $0.182$ ($\tau=0.5$), confirming the errors-in-variables attenuation (Figure~\ref{fig:theory_validation}b). Fixing $\alpha=1$ on real data degrades mean ECE by $+1.10$ percentage points (Appendix~\ref{app:component_ablation}).

\section{Homophily-Aware Temperature Scaling} \label{sec:method}
\paragraph{Method} We propose Homophily-aware Temperature Scaling (HoTS), a node-wise calibration method that combines predictive uncertainty with local graph structure. The theory suggests two design principles. First, in the large-$\dbar\htilde$ part of the positive-signal homophilic CSBM regime, the Bayes temperature decreases approximately inversely with normalized homophily. Second, node-level confidence variation can be captured by an entropy-based logit-concentration proxy. For each node $i$, HoTS defines the temperature
\begin{equation}\label{eq:gts}
    {T_i = T_{\mathrm{base}} + \frac{\beta\sqrt{2K\log K(1-e_i)}}{(|\hat{\htilde}_i|+\varepsilon)^\alpha}}
\end{equation}
where $e_i=H(\mathrm{softmax}(z_i))/\log K$ is the normalized predictive entropy, and
$\hat{\htilde}_i=(K\hat h_i-1)/(K-1)$ is the normalized estimated homophily. The calibrated prediction is
$\mathrm{softmax}(z_i/T_i)$. Equation~\eqref{eq:gts} acts on the learned GCN logits $z_i$, whereas Theorem~\ref{thm:signal} characterizes the Bayes posterior with respect to the CSBM class-template scores $s$; HoTS thus uses the learned logits as a practical surrogate for the unobserved class-template scores, and the form of the temperature is motivated by, rather than derived from, the exact Bayes rule. Section~\ref{exp:main} reports the main calibration results across 18 benchmarks.

\paragraph{Homophily estimation} The method uses the estimated homophily $\hat h_i$ rather than the oracle homophily $h_i$. In our implementation, $\hat h_i$ is produced by a small auxiliary GCN trained on targets computed using only training and validation labels, excluding self-loops and unobserved-label neighbors (architecture and training details in Appendix~\ref{app:hots_predictor}). The predictor is held fixed during temperature fitting. This construction avoids test-label leakage while retaining local graph information.

\paragraph{Learnable parameters} The learnable temperature parameters are $T_{\mathrm{base}}>0$, $\beta>0$, and $\alpha>0$.
The base temperature $T_{\mathrm{base}}$ provides a global positive offset, $\beta$ controls the
magnitude of the homophily-aware correction, and $\alpha$ controls the sensitivity to the
estimated homophily magnitude. The stabilizer $\varepsilon>0$ is fixed. The CSBM analysis
(Proposition~\ref{prop:alpha}) suggests an asymptotic exponent in $[0,1]$ under the log-space
least-squares surrogate, but we do not enforce this upper bound in implementation, since real
graphs may deviate from the idealized regime. In implementation, the positivity constraints on
$T_{\mathrm{base}}$, $\beta$, and $\alpha$ are enforced through softplus parameterizations.

\paragraph{Interpretation and prediction preservation} The numerator $\sqrt{2K\log K(1-e_i)}$ increases with logit concentration and is small near the uniform predictive distribution. In the oracle positive-signal homophilic regime, a denominator $(\htilde_i+\varepsilon)^\alpha$ is motivated by the large-$\dbar\htilde$ inverse-homophily Bayes temperature law, not by the exact finite-$\dbar\htilde$ Bayes expression near $\htilde_i=0$. In the implemented HoTS rule, we replace the oracle $\htilde_i$ by the estimated signal and use $(|\hat{\htilde}_i|+\varepsilon)^\alpha$ as a stable positive-temperature parametrization when the estimated structural signal may be negative, weak, or noisy. The offset $T_{\mathrm{base}}$ and stabilizer $\varepsilon$ absorb the finite baseline behavior that is not captured by the large-signal inverse law. This absolute-value extension should not be interpreted as an exact signed Bayes posterior approximation in the negative-signal regime. In the idealized CSBM regime with oracle homophily, the asymptotic exponent is $\alpha=1$. With estimated homophily and model mismatch, the exponent is learned from calibration data, allowing the strength of the homophily correction to adapt to each dataset. Since HoTS uses positive temperatures, it preserves the original predicted class and adjusts only the confidence assigned to the prediction.

\section{Empirical Validation} \label{sec:experiments}
The empirical study addresses three questions. We ask whether a homophily-aware temperature rule improves calibration across graph regimes without a high-capacity calibration network, whether controlled CSBM experiments recover the predicted inverse dependence of temperature on homophily and its attenuation under noisy homophily estimates, and whether HoTS yields confidence rankings that are more useful for downstream decisions.

\paragraph{Experimental Setup} \label{exp:setup} The evaluation covers $18$ public node-classification benchmarks spanning citation graphs (Cora, CiteSeer, PubMed)~\citep{yang2016revisiting}, Amazon co-purchase and Coauthor graphs (Computers, Photo, CS, Physics, CoraFull)~\citep{shchur2018pitfalls}, WebKB and Wikipedia-derived heterophilic graphs (Texas, Cornell, Wisconsin, Chameleon, Squirrel, Actor)~\citep{pei2020geom}, the heterophilous benchmark of~\citet{platonov2023critical} (Roman-Empire, tolokers), and large-scale graphs ogbn-arxiv~\citep{hu2020open} and Reddit~\citep{hamilton2017inductive}. Following convention in prior GNN and calibration benchmarks, Table~\ref{tab:main_ece} groups these datasets into homophilic, heterophilic, and large-scale regimes; the heterophilic group follows the benchmark designations of \citet{pei2020geom} and ~\citet{platonov2023critical} rather than a strict homophily threshold, and per-dataset homophily statistics are reported in Appendix~\ref{app:datasets}. Unless otherwise stated, each dataset and backbone pair is run with $10$ random seeds under a $20/10/70$ train, validation, and test split. The backbones are GCN~\citep{kipf2017semi} and GAT~\citep{velivckovic2017graph}. Table~\ref{tab:main_ece} pools both backbones, while Appendix~\ref{app:per_backbone} gives the per-backbone breakdown. The primary metric is Expected Calibration Error (ECE) with $15$ equally spaced confidence bins. Because ECE is a binned summary, Appendix~\ref{app:additional_metrics} also reports negative log-likelihood (NLL) and degree-stratified ECE. Baselines include standard post-hoc calibrators, Temperature Scaling (TS), Vector Scaling (VS)~\citep{guo2017calibration}, Ensemble Temperature Scaling (ETS)~\citep{zhang2020mix}, entropy-aware HTS~\citep{balanya2024adaptive}, and graph-aware calibrators CaGCN, GATS, GETS, and WATS. RBS and SimCalib are omitted only because their public reference could not be located, while Section~\ref{sec:related} discusses them as the closest prior works to HoTS. HoTS fits its three temperature parameters on the calibration split, uses training-split cross-entropy for early stopping, and uses the homophily estimate described in Section~\ref{sec:method}. 

\begin{table}[t]
\centering
\setlength{\tabcolsep}{2pt}
\caption{Calibration ECE (\%, mean$\pm$std over 10 seeds per backbone, pooled across GCN and GAT). Best per row in \textcolor{red}{red}, second in \textbf{bold}, third \underline{underlined} (excluding Uncal).}

\label{tab:main_ece}

\resizebox{\textwidth}{!}{%
\begin{tabular}{lcccccccccc}
\toprule
Dataset & Uncal & TS & VS & ETS & HTS & CaGCN & GATS & GETS & WATS & HoTS (ours) \\
\midrule
\multicolumn{11}{l}{\textit{Homophilic}} \\
Cora & $19.64 \pm 2.62$ & $3.14 \pm 0.82$ & $\boldsymbol{2.90 \pm 0.78}$ & $\underline{2.94 \pm 0.82}$ & $3.58 \pm 0.56$ & $4.12 \pm 0.78$ & $2.99 \pm 0.96$ & $3.86 \pm 0.82$ & $3.08 \pm 0.73$ & $\textcolor{red}{2.50 \pm 0.72}$ \\
CiteSeer & $18.57 \pm 4.19$ & $\boldsymbol{3.04 \pm 0.62}$ & $3.20 \pm 0.56$ & $\textcolor{red}{2.93 \pm 0.63}$ & $3.78 \pm 0.86$ & $4.81 \pm 0.79$ & $3.19 \pm 0.86$ & $5.08 \pm 1.07$ & $\underline{3.10 \pm 0.57}$ & $3.50 \pm 0.80$ \\
PubMed & $12.15 \pm 2.15$ & $1.04 \pm 0.28$ & $1.16 \pm 0.39$ & $\underline{1.00 \pm 0.31}$ & $1.22 \pm 0.65$ & $1.11 \pm 0.99$ & $\boldsymbol{0.88 \pm 0.28}$ & $1.14 \pm 0.38$ & $\textcolor{red}{0.83 \pm 0.30}$ & $1.26 \pm 0.30$ \\
CoraFull & $32.93 \pm 4.51$ & $2.76 \pm 0.81$ & $2.84 \pm 0.63$ & $\textcolor{red}{2.20 \pm 0.63}$ & $3.94 \pm 0.63$ & $\boldsymbol{2.56 \pm 0.61}$ & $2.77 \pm 0.80$ & $3.72 \pm 0.98$ & $3.85 \pm 1.65$ & $\underline{2.73 \pm 0.81}$ \\
Computers & $5.86 \pm 1.42$ & $2.19 \pm 0.45$ & $2.07 \pm 0.45$ & $2.12 \pm 0.50$ & $1.78 \pm 0.55$ & $\textcolor{red}{1.39 \pm 0.34}$ & $1.91 \pm 0.49$ & $2.66 \pm 0.73$ & $\boldsymbol{1.62 \pm 0.60}$ & $\underline{1.76 \pm 0.49}$ \\
Photo & $4.31 \pm 0.90$ & $\underline{1.23 \pm 0.31}$ & $1.27 \pm 0.26$ & $1.30 \pm 0.49$ & $\textcolor{red}{1.14 \pm 0.48}$ & $1.35 \pm 0.42$ & $1.24 \pm 0.32$ & $1.42 \pm 0.42$ & $\underline{1.23 \pm 0.39}$ & $\boldsymbol{1.21 \pm 0.44}$ \\
CS & $2.72 \pm 1.20$ & $2.26 \pm 0.83$ & $\underline{2.03 \pm 0.68}$ & $2.22 \pm 0.87$ & $\textcolor{red}{1.34 \pm 0.31}$ & $2.39 \pm 0.83$ & $2.11 \pm 0.67$ & $2.64 \pm 2.82$ & $3.25 \pm 0.59$ & $\boldsymbol{1.51 \pm 0.60}$ \\
Physics & $1.20 \pm 0.63$ & $0.87 \pm 0.29$ & $0.85 \pm 0.26$ & $0.82 \pm 0.31$ & $\textcolor{red}{0.52 \pm 0.20}$ & $\underline{0.72 \pm 0.26}$ & $\underline{0.72 \pm 0.29}$ & $1.00 \pm 0.65$ & $1.41 \pm 0.47$ & $\boldsymbol{0.54 \pm 0.23}$ \\
\midrule
\multicolumn{11}{l}{\textit{Heterophilic}} \\
Texas & $21.86 \pm 8.25$ & $21.64 \pm 8.49$ & $18.23 \pm 6.93$ & $19.27 \pm 8.41$ & $\boldsymbol{17.69 \pm 6.71}$ & $20.33 \pm 7.82$ & $\underline{17.85 \pm 7.36}$ & $18.28 \pm 4.98$ & $24.78 \pm 8.18$ & $\textcolor{red}{15.37 \pm 5.90}$ \\
Cornell & $20.13 \pm 8.13$ & $20.05 \pm 7.39$ & $19.65 \pm 6.58$ & $18.18 \pm 7.60$ & $\underline{18.06 \pm 7.39}$ & $\boldsymbol{15.93 \pm 7.01}$ & $18.58 \pm 7.91$ & $19.58 \pm 7.90$ & $25.17 \pm 6.87$ & $\textcolor{red}{14.28 \pm 5.82}$ \\
Wisconsin & $19.31 \pm 7.31$ & $19.06 \pm 7.19$ & $19.62 \pm 7.19$ & $\boldsymbol{16.81 \pm 6.81}$ & $18.13 \pm 6.77$ & $18.77 \pm 6.76$ & $\underline{17.83 \pm 6.71}$ & $19.93 \pm 7.08$ & $24.57 \pm 7.34$ & $\textcolor{red}{14.67 \pm 5.38}$ \\
Chameleon & $11.50 \pm 2.51$ & $11.58 \pm 2.57$ & $11.24 \pm 2.78$ & $11.19 \pm 3.63$ & $\textcolor{red}{8.95 \pm 1.79}$ & $\underline{9.20 \pm 2.38}$ & $10.67 \pm 3.05$ & $11.00 \pm 3.43$ & $13.18 \pm 1.94$ & $\boldsymbol{9.06 \pm 2.35}$ \\
Squirrel & $6.19 \pm 2.43$ & $6.19 \pm 2.42$ & $6.17 \pm 2.53$ & $6.37 \pm 2.61$ & $\boldsymbol{4.47 \pm 1.99}$ & $\textcolor{red}{4.38 \pm 1.60}$ & $5.87 \pm 2.73$ & $7.31 \pm 3.04$ & $6.55 \pm 2.11$ & $\underline{4.97 \pm 1.53}$ \\
Actor & $2.47 \pm 0.98$ & $\underline{2.47 \pm 0.84}$ & $\textcolor{red}{1.96 \pm 0.76}$ & $\underline{2.47 \pm 0.93}$ & $2.68 \pm 1.01$ & $\boldsymbol{2.37 \pm 0.87}$ & $2.52 \pm 1.01$ & $4.38 \pm 2.13$ & $4.77 \pm 1.89$ & $2.49 \pm 0.85$ \\
Roman-Empire & $12.27 \pm 1.09$ & $\boldsymbol{2.64 \pm 0.65}$ & $\textcolor{red}{2.10 \pm 0.83}$ & $\underline{2.72 \pm 0.77}$ & $4.06 \pm 1.20$ & $2.80 \pm 0.84$ & $2.84 \pm 0.64$ & $3.37 \pm 0.95$ & $2.93 \pm 1.04$ & $4.36 \pm 0.78$ \\
tolokers & $4.30 \pm 1.09$ & $4.22 \pm 0.84$ & $3.66 \pm 0.81$ & $4.23 \pm 0.81$ & $\underline{2.92 \pm 0.95}$ & $\boldsymbol{2.67 \pm 0.81}$ & $3.75 \pm 0.93$ & $\textcolor{red}{2.04 \pm 0.63}$ & $4.25 \pm 0.65$ & $3.25 \pm 0.78$ \\
\midrule
\multicolumn{11}{l}{\textit{Large}} \\
ogbn-arxiv & $5.06 \pm 1.99$ & $2.65 \pm 0.52$ & $2.72 \pm 0.34$ & $2.57 \pm 0.55$ & $\textcolor{red}{1.29 \pm 0.18}$ & $\underline{1.82 \pm 0.36}$ & $2.63 \pm 0.77$ & $4.51 \pm 0.85$ & $2.19 \pm 0.58$ & $\boldsymbol{1.53 \pm 0.40}$ \\
Reddit & $5.35 \pm 1.59$ & $1.68 \pm 0.67$ & $1.84 \pm 0.58$ & $1.73 \pm 0.52$ & $\boldsymbol{1.14 \pm 0.13}$ & $\textcolor{red}{0.94 \pm 0.48}$ & -- & $1.92 \pm 1.31$ & $\underline{1.24 \pm 0.47}$ & $1.25 \pm 0.18$ \\
\midrule
Avg & $11.43$ & $6.04$ & $5.75$ & $5.62$ & $\boldsymbol{5.37}$ & $\underline{5.43}$ & $5.79$ & $6.32$ & $7.11$ & $\textcolor{red}{4.79}$ \\
Avg rank & -- & 5.61 & 5.17 & 4.58 & \textbf{3.94} & \underline{4.08} & 4.38 & 7.22 & 6.42 & \textcolor{red}{3.33} \\
\bottomrule
\end{tabular}%
}
\end{table}
\paragraph{Calibration Across Graph Regimes} \label{exp:main}
Table~\ref{tab:main_ece} reports test ECE. HoTS achieves the lowest mean ECE, $4.79\%$, and the best average rank, $3.33$, among the compared calibrators. On several homophilic datasets, the strongest baselines are statistically close. On PubMed, CoraFull, Roman-Empire, and tolokers, another method obtains the lowest ECE. HoTS improves the aggregate calibration profile while using a much smaller calibrator, and provides substantial improvements on benchmarks such as Texas, Cornell, and Wisconsin. This is consistent with the motivation of HoTS. A single global temperature can be sufficient when reliability errors are nearly uniform, whereas graph-dependent correction becomes more useful when local structure changes the meaning of confidence. Results under NLL and degree-stratified ECE are reported in Appendix~\ref{app:additional_metrics}.

\paragraph{Controlled Validation of the Structural Law} \label{exp:csbm} The theory predicts two qualitative effects. Theorem~\ref{thm:signal} implies that, in the positive-signal CSBM regime, the Bayes temperature decreases as the homophily signal $\Gamma(h)$ increases. Proposition~\ref{prop:alpha} predicts that noisy homophily estimates attenuate the apparent power-law exponent. Figure~\ref{fig:theory_validation} tests these two predictions in controlled CSBM experiments.
\begin{figure}[h!]
\centering
\includegraphics[width=0.80\textwidth]{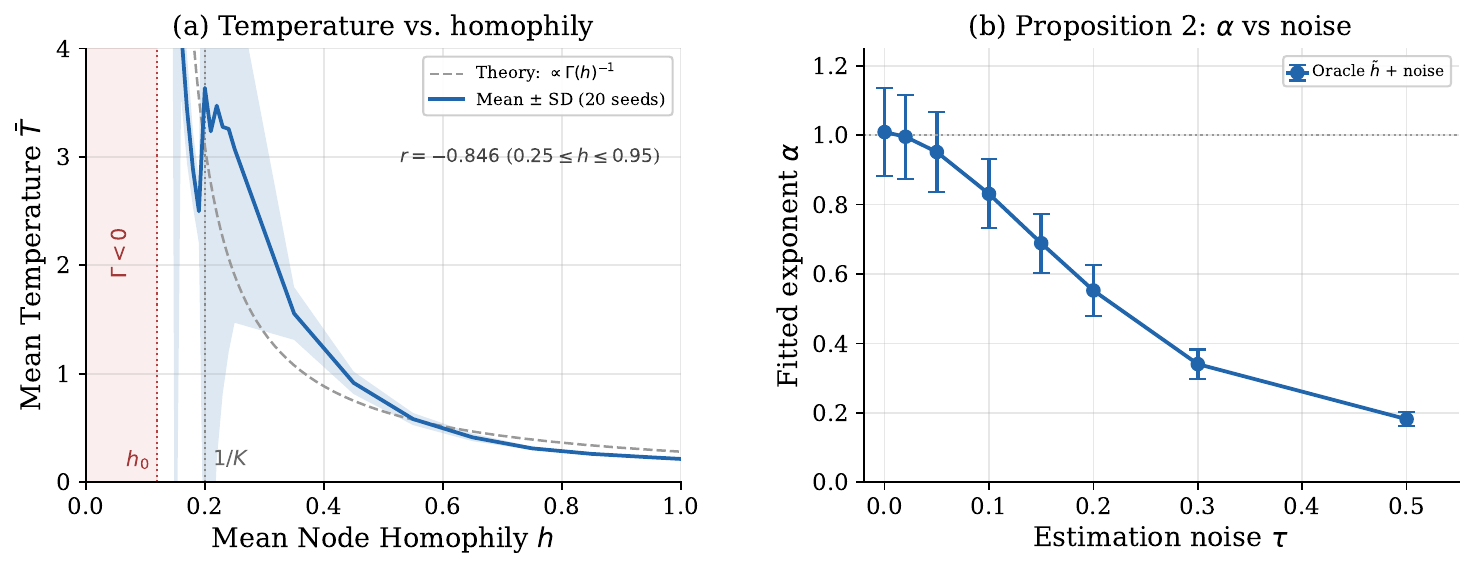}
\caption{Controlled CSBM validation. \textbf{(a)} Fitted temperature over the full homophily range; the blue curve and shaded band show the mean and one standard deviation across 20 seeds. The dashed curve shows the inverse-homophily trend. Pearson correlation is computed over the eight sweep points in $0.25\le h\le0.95$. We mark $h_0$ and $1/K$ and shade $\Gamma(h)<0$; the displayed vertical range is truncated. \textbf{(b)} The fitted exponent $\alpha$ is near the oracle value under noiseless homophily and decreases as estimation noise increases, matching Proposition~\ref{prop:alpha}'s attenuation.}
\label{fig:theory_validation}
\end{figure}

In the homophily sweep, all other CSBM parameters are fixed. Over the sweep range $0.25\le h\le0.95$, the mean fitted temperature decreases with homophily, with Pearson correlation $r=-0.846$ computed from the eight mean temperatures over 20 seeds, matching the predicted trend $T^*(h)\propto\Gamma(h)^{-1}$. Figure~\ref{fig:theory_validation}(a) also shows the additional experiments at lower homophily. In the noise sweep, we add controlled noise to the oracle homophily signal before fitting the power-law exponent. The fitted exponent decreases from $1.009$ at zero noise to $0.182$ at the largest noise level, consistent with the errors-in-variables attenuation formula. These experiments do not prove that real graphs follow the CSBM exactly. Instead, they verify that the structural dependencies used by HoTS are recoverable in the setting where the theory is meant to apply. On real benchmarks, fixing the exponent to the oracle value $\alpha=1$ increases mean ECE by $+1.10$ percentage points under the matched comparison in Appendix~\ref{app:component_ablation}, supporting the decision to learn $\alpha$ from calibration data.

\begin{table}[!htbp] 
\centering\footnotesize
\caption{Selective classification (top prediction-preserving baselines): retained-node accuracy (\%) at coverage $c$, macro-averaged   
over 18 datasets, 2 backbones, and 10 seeds. Restricted to baselines that strictly preserve the  
predicted class (see Section~\ref{app:vsgets}). Best per row in \textbf{bold} (excluding Uncal).}
\label{tab:selective}
\begin{tabular}{lcccc}
\toprule
Coverage & Uncal & TS & HTS & HoTS (ours) \\
\midrule
100\% & 67.39 & 67.39 & 67.39 & 67.39 \\
95\% & 68.87 & 68.90 & 68.92 & \textbf{68.94} \\
90\% & 70.01 & 70.08 & 70.13 & \textbf{70.15} \\
85\% & 71.04 & 71.12 & 71.17 & \textbf{71.18} \\
80\% & 71.94 & 72.05 & \textbf{72.13} & 72.11 \\
75\% & 72.81 & 72.92 & 72.94 & \textbf{72.96} \\
70\% & 73.60 & 73.73 & 73.75 & \textbf{73.81} \\
\bottomrule
\end{tabular}
\end{table}

\paragraph{Selective Classification} Calibration also guides when to trust a prediction. We therefore evaluate selective classification. Nodes are ranked by calibrated confidence, and the least confident nodes are rejected. Because HoTS uses positive scalar temperatures, it preserves the predicted class at full coverage. Any improvement below 100\% coverage comes from a better confidence ranking rather than from changing the classifier. HoTS gives the highest retained accuracy among the compared methods at most evaluated coverage levels (Table~\ref{tab:selective}). These results support the use of homophily-aware temperature scaling when downstream decisions depend on separating reliable predictions from less reliable ones.

\paragraph{Does Homophily Explain Residual Reliability?} \label{exp:homophily_split}
Proposition~\ref{prop:necessity} motivates examining whether local homophily is associated with reliability beyond the base-model confidence. We stratify test nodes by confidence and compare empirical accuracy across oracle-homophily groups within each bin. Using oracle homophily lets us examine the reliability information in neighborhood label composition itself, separately from errors in estimating it. The high-homophily group is more accurate on most datasets, with pronounced gaps on homophilic and large-scale graphs and mixed behavior on heterophilic graphs (Figure~\ref{fig:p1_homophily_split}). This retrospective diagnostic motivates homophily as a structural variable for calibration. True labels are used only to form the diagnostic groups; HoTS uses estimated homophily in its temperature map.

\paragraph{Additional Experiments} \label{exp:diagnostics_pointer}
Additional experiments are reported in the appendix to keep the main text focused. Additional GNN backbone comparisons are documented in Appendix~\ref{app:cross_backbone}. Appendix~\ref{app:additional_metrics} and Appendix~\ref{app:per_backbone} show robustness under NLL, degree-stratified ECE, and separate GCN and GAT evaluations. Appendix~\ref{app:reliability_diagnostics} shows that HoTS reduces the residual calibration gap after temperature scaling and relates node-level confidence gaps to entropy concentration and oracle homophily. Appendix~\ref{app:kl_validation} validates the sparse-overlap KL scaling used in the theory, and Appendix~\ref{app:shift} studies random edge perturbations. Appendix~\ref{app:homophily_diagnostics} extends the residual-reliability diagnostic of this section. Appendix~\ref{app:structural_comparison} compares homophily with representative motif, centrality, and curvature statistics after controlling for confidence and degree. Appendix~\ref{app:component_ablation} compares HoTS with entropy-only scaling to assess the contribution of homophily.

\section{Conclusion} \label{sec:Conclusion}

This paper presents a structural view of post-hoc calibration for graph neural networks. The main insight is that confidence cannot always be calibrated from logits alone, because the reliability of a graph prediction also depends on the local evidence induced by message passing. When nodes with comparable logits differ in neighborhood homophily, they can require different temperatures to produce reliable probabilities. We formalize this through a necessity result showing that a logit-only calibrator incurs an avoidable calibration gap whenever homophily explains residual variation in the optimal temperature. The CSBM analysis makes this principle explicit by showing that, for class-template scores under a population-concentration model, the Bayes inverse-temperature is governed by a homophily-dependent signal strength. In the positive-signal homophilic regime, this yields an approximate inverse relation between temperature and normalized local homophily.

HoTS translates this law into a three-parameter temperature map by combining entropy-based logit concentration, estimated homophily, and a learned exponent that adapts to noisy structural estimates. Since HoTS uses positive scalar temperatures, it improves probability reliability while preserving the predicted class. Across 18 benchmarks and both GCN and GAT backbones, HoTS achieves the best mean ECE and average rank among the compared calibrators. However, the exact Bayes characterization is derived under an idealized CSBM with Gaussian features, equidistant class means, uniform class priors, and a one-layer linear GCN in a population-concentration limit. Thus, the inverse-homophily law should be viewed as a structural prior that motivates HoTS rather than an exact Bayes rule for arbitrary real graphs, especially under class imbalance, nonlinear multi-layer message passing, or low-degree finite-sample fluctuations. Overall, the results show that a compact theory-guided structural correction can provide an interpretable and effective path toward reliable probabilistic graph learning.

\bibliographystyle{plainnat}
\bibliography{references}

\clearpage

\appendix
\clearpage
\onecolumn 
\thispagestyle{empty}

\begin{center}
\vspace*{0.1\textheight}

\hrule height 2pt
\vspace{0.8em}
{\LARGE \textbf{Supplementary Material}}
\vspace{0.8em}
\hrule height 1pt
\vspace{4em}

{\Large \textbf{Appendix Table of Contents}}
\vspace{1.5em}

\renewcommand{\arraystretch}{1.3}
\begin{tabular}{@{}p{0.2\textwidth} p{0.7\textwidth}@{}}
\toprule
\textbf{Appendix \ref{app:auxiliary}} & \textbf{Auxiliary Results and Proofs} \\
\midrule
& \ref{app:formal_csbm} Formal Population Concentration Model for Theorem~\ref{thm:signal} \\
& \ref{app:necessity_proof} Proof of Proposition~\ref{prop:necessity} \\
& \ref{app:bayes_proof} Proof of Theorem~\ref{thm:signal} \\
& \ref{app:sufficiency} Class-Symmetric Representative and Scalar $h$ Sufficiency \\
& \ref{app:entropy_expansions} Entropy and Logit Taylor Expansions \\
& \ref{app:binary_reliability} Binary CSBM Reliability Calculation \\
& \ref{app:alpha_proof} Proof of Proposition~\ref{prop:alpha} \\
& \ref{app:sparse_overlap} Sparse-Overlap Gaussian Residual Approximation \\
\midrule
\textbf{Appendix \ref{app:empirical}} & \textbf{Additional Empirical Results} \\
\midrule
& \ref{app:additional_metrics} Additional Calibration Metrics \\
& \ref{app:per_backbone} Per-Backbone Results \\
& \ref{app:cross_backbone} Calibration Across GNN Backbones \\
& \ref{app:reliability_diagnostics} Reliability Diagrams and Node-Level Diagnostics \\
& \ref{app:kl_validation} KL Bound Validation \\
& \ref{app:shift} Structural Perturbation Stress Test \\
& \ref{app:homophily_diagnostics} Confidence-Conditioned Homophily Diagnostics \\
& \ref{app:structural_comparison} Comparison with Other Structural Statistics \\
& \ref{app:component_ablation} HoTS Component Ablation \\
\midrule
\textbf{Appendix \ref{app:implementation}} & \textbf{Implementation Details} \\
\midrule
& \ref{app:datasets} Dataset Statistics and Preprocessing \\
& \ref{app:hardware} Hardware and Software \\
& \ref{app:gnn_hyperparams} GNN Training Hyperparameters \\
& \ref{app:cal_hyperparams} Calibration Method Hyperparameters \\
& \ref{app:hots_predictor} HoTS Homophily Predictor \\
& \ref{app:vsgets} Prediction Preservation Across Calibrators \\
\bottomrule
\end{tabular}

\vfill
\end{center}
\clearpage
\section{Auxiliary Results and Proofs}
\label{app:auxiliary}

This appendix collects the technical details omitted from the main text. The results are organized according to the order in which they are used. We first state the population-concentration CSBM used for the posterior calculation. We then prove the homophily necessity result, the CSBM posterior theorem, and the class-symmetric scalar-sufficiency statement. We next give the entropy expansion, the binary reliability calculation, and the exponent attenuation proof. The final theoretical subsection states the sparse-overlap approximation used to justify the neighbor-logit product approximation.

\subsection{Formal Population Concentration Model for Theorem~\ref{thm:signal}}
\label{app:formal_csbm}

The posterior calculation in Theorem~\ref{thm:signal} uses the following population-concentration version of the CSBM. The graph has $K$ classes with uniform prior. Edges are generated independently with probabilities $p_{\mathrm{in}}$ within classes and $p_{\mathrm{out}}$ across classes, with $p_{\mathrm{in}}\neq p_{\mathrm{out}}$. Node features satisfy
\begin{equation}
    x_j=\mu_{y_j}+\epsilon_j,\qquad
    \epsilon_j\sim\mathcal N(0,\sigma^2 I)
\end{equation}
independently, with $\sigma>0$.

The one layer aggregate is
\begin{equation}
    \bar x_i
    =
    \frac{1}{d_i+1}
    \left(x_i+\sum_{j\in\mathcal N(i)}x_j\right).
\end{equation}
For the posterior calculation, conditional on $Y_i=c$ and the realized oracle local homophily $H_i=h$, we replace the degree by its population value $d_i=\bar d$ and fix the neighbor class composition at its population-concentration value. A fraction $h$ of neighbors belongs to class $c$, and the remaining fraction $1-h$ is class balanced over the other $K-1$ classes. This deterministic composition approximation removes finite class composition fluctuations. The only remaining randomness in $\bar x_i$ comes from the Gaussian feature noises.

The class means are equidistant. There exists $c_\mu>0$ such that
\begin{equation}
    \|\mu_k-\mu_\ell\|_2=c_\mu,\qquad k\neq \ell.
\end{equation}
Let
\begin{equation}
    \bar\mu=\frac1K\sum_{r=1}^K\mu_r,\qquad
    m_c=\mu_c-\bar\mu,\qquad
    \rho_0=\|m_c\|.
\end{equation}
Then $\{m_c\}_{c=1}^K$ forms a regular simplex on its affine span. We define the class-template score vector by
\begin{equation}
    s_c=\rho_0^{-1}m_c^\top(\bar x_i-\bar\mu).
\end{equation}
Finally, write
\begin{equation}
    \tilde h=\frac{Kh-1}{K-1},\qquad
    v=\frac{\sigma^2}{\bar d+1},\qquad
    \Gamma(h)=\frac{\rho_0(1+\bar d\tilde h)}{\bar d+1}.
\end{equation}

\subsection{Proof of Proposition~\ref{prop:necessity}}
\label{app:necessity_proof}

For each $(z,h)$, let
\begin{equation}
    r_{z,h}(\tau)
    =
    \EE[\ell(\tau;Z,Y)\mid Z=z,H=h].
\end{equation}
Assume that $r_{z,h}$ has a unique interior minimizer $\tau^*(z,h)$. Let
\begin{equation}
    \kappa(z,h)
    =
    \left.
    \partial_\tau^2 r_{z,h}(\tau)
    \right|_{\tau=\tau^*(z,h)} .
\end{equation}
Define the curvature-weighted projection
\begin{equation}
    \bar\tau_\kappa(z)
    =
    \frac{
        \EE_H[\kappa(z,H)\tau^*(z,H)\mid Z=z]
    }{
        \EE_H[\kappa(z,H)\mid Z=z]
    } .
\end{equation}
Assume that the within-$z$ variation of $\tau^*(z,H)$ is locally small and that the third derivatives of $r_{z,h}$ are uniformly bounded in the relevant neighborhood. Let
\begin{equation}
    \epsilon_z
    =
    \sup_h|\tau^*(z,h)-\bar\tau_\kappa(z)|
\end{equation}
over the support of $H\mid Z=z$. Then
\begin{equation}
\begin{aligned}
    &
    \inf_{\tau\in\cT_z}\EE[\ell(\tau(Z);Z,Y)]
    -
    \inf_{\tau\in\cT_{z,h}}\EE[\ell(\tau(Z,H);Z,Y)]
    \\
    &\quad =
    \frac12\EE\!\left[
        \kappa(Z,H)
        \bigl(\tau^*(Z,H)-\bar\tau_\kappa(Z)\bigr)^2
    \right]
    +
    o\!\left(\EE_Z[\epsilon_Z^2]\right).
\end{aligned}
\end{equation}

For fixed $z$, the loss is convex in $\tau$ because
\begin{equation}
    \partial_\tau^2\ell(\tau;z,y)
    =
    \operatorname{Var}_{p_\tau}(z_k)
    \ge0,
\end{equation}
where $p_\tau=\mathrm{softmax}(\tau z)$. It is strictly convex whenever $z$ is not proportional to $\mathbf 1$.

The displayed expansion follows by Taylor expanding $r_{z,h}$ around $\tau^*(z, h)$ and then minimizing the leading quadratic term over a single logit-only value. The minimizer of this quadratic approximation is precisely $\bar{\tau}_\kappa(z)$. %
\textbf{Equivalently, $\bar{\tau}_\kappa(z)$ is the local-quadratic surrogate for the exact CE-optimal logit-only inverse-temperature $\tau^\star(z)$ in Proposition~\ref{prop:necessity}: under the bounded-third-derivative regularity above, $\tau^\star(z) = \bar{\tau}_\kappa(z) + o(\epsilon_z)$, so the displayed gap holds with $\tau^\star(z)$ in place of $\bar{\tau}_\kappa(z)$ at the same leading order.}
Hence the leading-order gap is a curvature-weighted conditional variance. If $\tau^*(z,H)$ is not almost surely constant given $Z=z$, strict convexity implies that no single logit-only value can attain all conditional minima simultaneously. This gives a strictly positive expected CE gap.

\subsection{Proof of Theorem~\ref{thm:signal}}
\label{app:bayes_proof}

Condition on $Y_i=c$ and $H_i=h$. Under the population-concentration model in Appendix~\ref{app:formal_csbm}, the self-loop contributes one copy of class $c$, the same class neighbors contribute $h\bar d$ population weighted copies of class $c$, and the off class neighbors are class-balanced over the other $K-1$ classes. Hence the conditional aggregate mean is
\begin{equation}
    \bar\mu+a(h)m_c,
    \qquad
    a(h)=\frac{1+\bar d\tilde h}{\bar d+1}.
\end{equation}
Since the class composition is fixed at its population value, the only remaining randomness is the independent Gaussian feature noise. The aggregate covariance is $vI$, with $v=\sigma^2/(\bar d+1)$. Thus
\begin{equation}
    \bar x_i\mid(Y_i=c,H_i=h)
    \sim
    \cN(\bar\mu+a(h)m_c,vI).
\end{equation}

Decompose $\bar x_i-\bar\mu$ into its projection onto $\operatorname{span}\{m_1,\ldots,m_K\}$ and its orthogonal component. The orthogonal component has a class independent Gaussian distribution. The likelihood ratio depends on $\bar x_i$ only through
\begin{equation}
    \left(
    m_1^\top(\bar x_i-\bar\mu),
    \ldots,
    m_K^\top(\bar x_i-\bar\mu)
    \right),
\end{equation}
or equivalently through the class-template score vector $s$. Hence the posterior conditional on the full aggregate is the same function of $s$ as the posterior conditional on $s$ alone. Since the class prior is uniform and the model is class-symmetric, $P(Y_i=c\mid H_i=h)=1/K$. Bayes rule gives a posterior proportional to
\begin{equation}
    \exp\!\left(
        -\frac{\|\bar x_i-\bar\mu-a(h)m_c\|^2}{2v}
    \right).
\end{equation}

Expanding the squared norm, the terms $\|\bar x_i-\bar\mu\|^2$ and $a(h)^2\|m_c\|^2$ are class independent and cancel in the softmax. The remaining term is
\begin{equation}
    \frac{a(h)}{v}(\bar x_i-\bar\mu)^\top m_c .
\end{equation}
Since $s_c=\rho_0^{-1}m_c^\top(\bar x_i-\bar\mu)$ and $\Gamma(h)=a(h)\rho_0$, we obtain
\begin{equation}
    P(Y_i=c\mid s,H_i=h)
    =
    \mathrm{softmax}\!\left(
        \frac{\Gamma(h)}{v}s
    \right)_c .
\end{equation}

Finally, if $z$ has a unique maximum coordinate $\hat y$, then $\mathrm{softmax}(\tau z)_{\hat y}$ is strictly increasing in $\tau$, because its derivative is
\begin{equation}
    p_{\hat y}
    \left(
        z_{\hat y}-\sum_k p_k z_k
    \right)
    >
    0 .
\end{equation}
Since $\Gamma(h)$ is strictly increasing in $h$, the posterior correctness probability is strictly increasing in $h$ on
\begin{equation}
    \mathcal H_+
    =
    \{h\mid \Gamma(h)>0\}.
\end{equation}

\paragraph{Strictness of the homophily gap}
Restricted to the positive-signal subpopulation $H\in\mathcal H_+$, the CSBM posterior verifies the strictness condition in Proposition~\ref{prop:necessity}. If the conditional distribution of $H$ on this subpopulation has at least two positive probability values in $\mathcal H_+$, then the Gaussian class-template score densities induced by those homophily values have overlapping support on the centered score subspace. Hence $P(H=h\mid S=s,H\in\mathcal H_+)$ is non-degenerate for almost every non-degenerate $s$. Since $\tau_{\mathrm{Bayes}}(h)=\Gamma(h)/v$ is strictly increasing in $h$ on $\mathcal H_+$, the conditional variance of $\tau_{\mathrm{Bayes}}(H)$ given $S=s$ is positive for almost every such $s$. Therefore, under the population-concentration CSBM posterior, the positive temperature calibration gap in Proposition~\ref{prop:necessity} is strict whenever homophily varies within fixed class-template score groups on $\mathcal H_+$.

Assume that $H$ has at least two positive probability values $h\neq h'$ in $\mathcal H_+$. For each such value, the conditional distribution of the centered aggregate is a non-degenerate Gaussian in feature space because $\sigma>0$. After projection onto the centered class-template logit subspace, the induced conditional density of $Z$ is strictly positive on that subspace. Therefore the densities $f_{Z\mid H=h}(z)$ and $f_{Z\mid H=h'}(z)$ overlap on a set of full measure in the centered logit subspace.

By Bayes rule, both $P(H=h\mid Z=z)$ and $P(H=h'\mid Z=z)$ are positive for almost every such $z$. Hence $H$ is not a deterministic function of $Z$. Since $\tau_{\mathrm{Bayes}}(h)=\Gamma(h)/v$ is strictly increasing in $h$ on $\mathcal H_+$, the conditional distribution of $\tau_{\mathrm{Bayes}}(H)$ given $Z=z$ is non-degenerate for almost every non-degenerate $z$. This proves the strictness claim used after Theorem~\ref{thm:signal}.

For non-atomic or binned homophily distributions, the same proof applies after replacing the two atoms by two disjoint measurable subsets of $\mathcal H_+$ with positive probability and overlapping conditional logit densities.

\paragraph{Positive and negative signal}
At $\htilde=0$, corresponding to a random neighborhood, $\Gamma(h)=\rho_0/(\dbar+1)>0$. Thus the node's own feature contributes signal even when the neighborhood is uninformative. The approximation $\Gamma(h)\approx\Gamma_0\htilde$ is valid only in the large-degree homophilic regime $\dbar\htilde\gg1$. If $1+\dbar\htilde<0$, then $\Gamma(h)<0$ and the signed Bayes inverse-temperature is negative. A negative inverse-temperature reverses the template logit ordering and cannot be represented by ordinary positive temperature scaling. Therefore exact positive temperature Bayes optimality statements are restricted to $\mathcal H_+$. The implemented HoTS method uses an absolute value extension for stability outside this regime. This extension should be interpreted as a calibration parametrization rather than as a signed Bayes posterior approximation.

\paragraph{Scope of the scalar temperature}
Under the population-concentration CSBM with equidistant class means, the centered class-symmetric representative makes the scalar group level Bayes temperature depend on the graph only through $h$. This is a statement about the idealized group level scalar temperature. It does not imply that arbitrary graph-aware features, such as neighbor features, neighbor logits, or finite-degree composition fluctuations, are conditionally irrelevant in finite CSBM or in real graphs. The symmetry argument is provided in Appendix~\ref{app:sufficiency}, and the sparse-overlap neighbor logit approximation is given in Appendix~\ref{app:sparse_overlap}.

\subsection{Class-Symmetric Representative and Scalar $h$ Sufficiency}
\label{app:sufficiency}

This subsection records the symmetry argument behind the statement that, in the population-concentration CSBM, the scalar group level temperature depends on the graph only through $h$.

Let $m_c=\mu_c-\bar\mu$. Under the equidistant means assumption in Appendix~\ref{app:formal_csbm}, the set $\{m_c\}_{c=1}^K$ is a regular simplex on its span. For every class permutation $\pi\in S_K$, there is an orthogonal map $R_\pi$ on the simplex span such that $R_\pi m_c=m_{\pi(c)}$. Let $P_\pi$ be the corresponding permutation matrix on output coordinates. Because the theoretical head acts on centered features, the CSBM distribution is invariant under the simultaneous transformation
\begin{equation}
    (y,x-\bar\mu)
    \mapsto
    (\pi(y),R_\pi(x-\bar\mu)).
\end{equation}
Hence the population CE risk is invariant under
\begin{equation}
    W
    \mapsto
    P_\pi W R_\pi^{-1}.
\end{equation}
The population CE risk is convex in the linear classifier $W$. Therefore, if $W^*$ is a population minimizer, the symmetrized classifier
\begin{equation}
    W^\dagger
    =
    |S_K|^{-1}
    \sum_{\pi\in S_K}P_\pi W^*R_\pi^{-1}
\end{equation}
is also a minimizer.

The symmetrized classifier is class equivariant, meaning that
\begin{equation}
    W^\dagger R_\pi=P_\pi W^\dagger .
\end{equation}
On the centered simplex span, this implies a one versus rest symmetric form, up to a softmax irrelevant constant shift of all logits. Under degree and off class composition concentration, the conditional distribution of logits given $(H_i=h,Y_i=c)$ depends on $c$ only through a permutation of coordinates. Therefore the scalar group level CE objective within a fixed $h$ group depends on the graph only through $h$, and the corresponding group optimal positive temperature is a function of $h$ alone on $\mathcal H_+$.

This is an idealized population-concentration statement. It does not say that arbitrary graph-aware information is redundant in finite CSBM or in real graphs, where degree variation, off class composition noise, learned logits, or neighbor features may carry additional calibration information.

\subsection{Entropy and Logit Taylor Expansions}
\label{app:entropy_expansions}

For the binary expansion, let $\Delta=z_1-z_2$. The binary predictive probability is $\sigma(\Delta)$, and the entropy is $H(\sigma(\Delta))$. A Taylor expansion around $\Delta=0$ gives
\begin{equation}
    H(\sigma(\Delta))
    =
    \log2-\frac{\Delta^2}{8}+O(\Delta^4).
\end{equation}
After normalizing by $\log2$ and inverting,
\begin{equation}
    |\Delta|
    =
    \sqrt{8\log2(1-e)}
    +
    O((1-e)^{3/2}).
\end{equation}

For the multi-class expansion, use softmax shift invariance and assume $\sum_k z_k=0$. Around $z=0$, the entropy satisfies
\begin{equation}
    H(\mathrm{softmax}(z))
    =
    \log K-\frac{\|z\|^2}{2K}+O(\|z\|^3).
\end{equation}
The quadratic term is the Fisher information at the uniform distribution restricted to the centered subspace. Dividing by $\log K$ gives
\begin{equation}
    1-e
    =
    \frac{\|z\|^2}{2K\log K}
    +
    O(\|z\|^3).
\end{equation}
Inversion yields
\begin{equation}
    \|z^\circ\|
    =
    \sqrt{2K\log K(1-e)}
    +
    O(1-e).
\end{equation}
For $K>2$, the cubic term generally does not vanish, so the remainder is $O(1-e)$ after taking the square root.

These expansions are local around the uniform predictive distribution. In HoTS, $\sqrt{2K\log K(1-e_i)}$ is used as a simple monotone proxy for logit concentration over the full entropy range rather than as a global identity.

\subsection{Binary CSBM Reliability Calculation}
\label{app:binary_reliability}

For $K=2$, the centered simplex templates satisfy $m_2=-m_1$. Under the centered class-template scores,
\begin{equation}
    s_1=\rho_0^{-1}m_1^\top(\bar x-\bar\mu),
    \qquad
    s_2=-s_1 .
\end{equation}
Conditional on $Y=1$ and $H=h$, the population-concentration model gives
\begin{equation}
    s_1\sim\cN(\Gamma(h),v).
\end{equation}
The class-template prediction is correct when $s_1>s_2$, equivalently when $s_1>0$. Hence
\begin{equation}
    q(h)
    =
    P(s_1>0\mid Y=1,H=h)
    =
    \Phi(\Gamma(h)/\sqrt v).
\end{equation}
By symmetry, the same class-template correctness probability holds conditional on $Y=2$.

Thus the exact logit reliability is
\begin{equation}
    \operatorname{logit}(q(h))
    =
    \operatorname{logit}\Phi(\Gamma(h)/\sqrt v).
\end{equation}
The approximation
\begin{equation}
    \operatorname{logit}(\Phi(x))
    \approx
    \frac{\pi}{\sqrt3}x
\end{equation}
is the standard variance matching logistic probit approximation. Combining it with the large $\dbar\htilde$ relation $\Gamma(h)\approx\Gamma_0\htilde$ gives
\begin{equation}
    \operatorname{logit}(q(h))
    \approx
    \frac{\pi\Gamma_0}{\sqrt{3v}}\htilde .
\end{equation}
This linear relation is a moderate-margin approximation. It is not a large-margin asymptotic, since for very large positive $x$, $\operatorname{logit}(\Phi(x))$ is not exactly linear.

\subsection{Proof of Proposition~\ref{prop:alpha}}
\label{app:alpha_proof}

The population regression slope of $\log T^*$ on $\hat u$ is
\begin{equation}
    \frac{\operatorname{Cov}(\hat u,\log T^*)}
    {\operatorname{Var}(\hat u)}.
\end{equation}
Since $\log T^*=a-u$ and $\hat u=u+\eta$, with $\eta$ independent of $u$ and mean zero, we have
\begin{equation}
    \operatorname{Cov}(\hat u,\log T^*)
    =
    -\operatorname{Var}(u)
\end{equation}
and
\begin{equation}
    \operatorname{Var}(\hat u)
    =
    \operatorname{Var}(u)+\tau^2 .
\end{equation}
Thus the fitted slope is
\begin{equation}
    -
    \frac{\operatorname{Var}(u)}
    {\operatorname{Var}(u)+\tau^2}.
\end{equation}
Because the model writes the slope as $-\alpha$, this gives
\begin{equation}
    \alpha^*
    =
    \frac{\operatorname{Var}(u)}
    {\operatorname{Var}(u)+\tau^2}.
\end{equation}
The strictness condition follows immediately.

\subsection{Sparse-Overlap Gaussian Residual Approximation}
\label{app:sparse_overlap}

This subsection records the technical approximation for neighbor-logits. It is used only to justify that, in a sparse-overlap regime, the Gaussian residual dependence among neighbor-logits is small after conditioning on the center feature $x_i$.

Condition on $x_i$. For two distinct neighbors $j,j'\in\cN(i)$, define $S_{jj'}$ as the number of shared residual aggregation inputs in
\begin{equation}
    (\cN(j)\cup\{j\})
    \cap
    (\cN(j')\cup\{j'\})
\end{equation}
after excluding the common center node $i$. Suppose
\begin{equation}
    \EE[S_{jj'}]=O(K\dbar^2/n),
    \qquad
    \EE[S_{jj'}^2]=O(K\dbar^2/n).
\end{equation}
Work on the effective non-degenerate residual logit subspace, equivalently using the Moore-Penrose inverse for whitening.

Let $W$ be the fixed linear head. Conditional on $x_i$, the residual covariance of one neighbor-logit vector is proportional to
\begin{equation}
    \frac{\dbar\sigma^2WW^\top}{(\dbar+1)^2},
\end{equation}
while the cross covariance between two neighbor residuals is proportional to
\begin{equation}
    \frac{S_{jj'}\sigma^2WW^\top}{(\dbar+1)^2}.
\end{equation}
After whitening on the effective rank $r$ logit subspace, the off-diagonal residual correlation block is
\begin{equation}
    R_{jj'}
    =
    \frac{S_{jj'}}{\dbar}I_r,
    \qquad
    r\leq K.
\end{equation}
Thus
\begin{equation}
    \|R_{jj'}\|_F^2
    \leq
    \frac{K S_{jj'}^2}{\dbar^2}.
\end{equation}
By the sparse-overlap assumption, its expectation is $O(K^2/n)$.

Let $\mathcal R$ be the block off-diagonal whitened residual correlation matrix for all neighbor residual logits. The joint whitened Gaussian residual covariance is $I+\mathcal R$, whereas the product of marginals has covariance $I$. The Gaussian KL from the joint residual law to the product law is
\begin{equation}
    -\frac12\log\det(I+\mathcal R),
\end{equation}
because $\mathcal R$ has zero diagonal blocks. If $\|\mathcal R\|_{\mathrm{op}}=o_p(1)$ and the log-determinant expansion is uniformly integrable, then the KL is
\begin{equation}
    \frac14\|\mathcal R\|_F^2
    +
    o(\|\mathcal R\|_F^2)
\end{equation}
in expectation. Since there are $O(\dbar^2)$ neighbor pairs and each contributes $O(K^2/n)$ in expectation, the expected Gaussian residual KL is
\begin{equation}
    O(K^2\dbar^2/n).
\end{equation}

Therefore, in the sparse-overlap regime $K^2\dbar^2/n\to0$, the product approximation for neighbor-logits is leading-order accurate for the Gaussian residual dependence. This is not an exact full finite CSBM independence theorem. Shared class labels, finite degree composition randomness, and learned logit effects can introduce additional dependence terms unless separately controlled.

\clearpage
\section{Additional Empirical Results}
\label{app:empirical}

This appendix provides additional empirical evidence supporting the main claims.
Appendix~\ref{app:additional_metrics} reports complementary calibration metrics beyond ECE.
Appendix~\ref{app:per_backbone} breaks down the results by GCN and GAT backbones.
Appendix~\ref{app:cross_backbone} records the additional backbone comparison and its experimental protocol.
Appendix~\ref{app:reliability_diagnostics} visualizes reliability diagrams and node-level confidence changes.
Appendix~\ref{app:kl_validation} checks the sparse-overlap KL scaling used in the theoretical approximation.
Appendix~\ref{app:shift} evaluates calibration under random structural perturbations.
Appendix~\ref{app:homophily_diagnostics} examines whether oracle local homophily is associated with prediction correctness after conditioning on confidence.
Appendix~\ref{app:structural_comparison} compares the residual reliability signal of oracle homophily with other structural statistics.
Appendix~\ref{app:component_ablation} compares HoTS with entropy-only scaling and examines learning versus fixing the homophily exponent.

\subsection{Additional Calibration Metrics}
\label{app:additional_metrics}
\label{app:metrics}

ECE is the primary metric because it directly measures confidence and accuracy mismatch, but it is a binned statistic. Table~\ref{tab:nll_degece} therefore reports two complementary summaries, mean NLL and degree-stratified ECE. The degree-stratified metric evaluates calibration after grouping nodes by degree, which is important for graph data because message passing reliability can vary with local connectivity.

\begin{table}[h]
\centering\small
\caption{Pooled (GCN $+$ GAT) mean NLL and degree-stratified ECE (\%) across 18 datasets and 10 seeds per backbone.}
\label{tab:nll_degece}
\begin{tabular}{lcccccccc}
\toprule
Metric & TS & ETS & HTS & CaGCN & GATS & GETS & WATS & HoTS (ours) \\
\midrule
NLL & 0.97 & 0.94 & 0.94 & 0.97 & 0.98 & 0.95 & 1.02 & \textbf{0.92} \\
deg\_ECE & 6.50 & 6.32 & 6.13 & \textbf{5.81} & 6.45 & 6.16 & 7.16 & 5.90 \\
\bottomrule
\end{tabular}
\end{table}

\clearpage
\subsection{Per-Backbone Results}
\label{app:per_backbone}

The main table averages over GCN and GAT to summarize the benchmark level behavior. Tables~\ref{tab:app_ece}, \ref{tab:app_deg_ece}, and \ref{tab:app_nll} give the corresponding per-backbone results. These tables are useful for diagnosing where the aggregate trends come from. HoTS is not uniformly best for every backbone and dataset, but its average behavior remains strong across the two architectures.

\begin{table}[!h]
\centering
\setlength{\tabcolsep}{2pt}
\caption{Per-backbone ECE (\%, mean$\pm$std over 10 seeds per backbone). Best per (dataset, backbone) row in \textcolor{red}{red}, second in \textbf{bold}, third \underline{underlined} (excluding Uncal).}
\label{tab:app_ece}
\resizebox{\textwidth}{!}{%
\begin{tabular}{llcccccccccc}
\toprule
Dataset & Backbone & Uncal & TS & VS & ETS & HTS & CaGCN & GATS & GETS & WATS & HoTS \\
\midrule
\multicolumn{12}{l}{\textit{Homophilic}} \\
\multirow{2}{*}{Cora} & GCN & $21.49 \pm 1.15$ & $3.42 \pm 0.70$ & $\boldsymbol{3.05 \pm 0.97}$ & $\underline{3.19 \pm 0.77}$ & $3.66 \pm 0.45$ & $4.08 \pm 0.88$ & $3.30 \pm 0.96$ & $4.16 \pm 0.92$ & $3.20 \pm 0.69$ & $\textcolor{red}{2.28 \pm 0.76}$ \\
 & GAT & $17.79 \pm 2.37$ & $2.86 \pm 0.83$ & $2.75 \pm 0.48$ & $\boldsymbol{2.69 \pm 0.78}$ & $3.49 \pm 0.64$ & $4.17 \pm 0.67$ & $\textcolor{red}{2.68 \pm 0.85}$ & $3.55 \pm 0.56$ & $2.95 \pm 0.75$ & $\underline{2.72 \pm 0.59}$ \\
\multirow{2}{*}{CiteSeer} & GCN & $21.87 \pm 3.46$ & $\boldsymbol{2.91 \pm 0.42}$ & $3.16 \pm 0.73$ & $\textcolor{red}{2.90 \pm 0.44}$ & $3.68 \pm 0.98$ & $4.93 \pm 1.08$ & $3.42 \pm 0.77$ & $5.66 \pm 1.10$ & $\underline{3.01 \pm 0.47}$ & $3.36 \pm 1.02$ \\
 & GAT & $15.26 \pm 1.17$ & $\underline{3.17 \pm 0.74}$ & $3.24 \pm 0.32$ & $\textcolor{red}{2.96 \pm 0.78}$ & $3.88 \pm 0.71$ & $4.69 \pm 0.22$ & $\textcolor{red}{2.96 \pm 0.87}$ & $4.49 \pm 0.63$ & $3.18 \pm 0.65$ & $3.65 \pm 0.44$ \\
\multirow{2}{*}{PubMed} & GCN & $13.90 \pm 1.58$ & $1.18 \pm 0.20$ & $1.44 \pm 0.31$ & $1.18 \pm 0.23$ & $1.45 \pm 0.65$ & $1.28 \pm 1.31$ & $\boldsymbol{0.91 \pm 0.30}$ & $\underline{1.14 \pm 0.38}$ & $\textcolor{red}{0.74 \pm 0.24}$ & $1.25 \pm 0.37$ \\
 & GAT & $10.41 \pm 0.83$ & $0.89 \pm 0.28$ & $\underline{0.87 \pm 0.23}$ & $\textcolor{red}{0.83 \pm 0.28}$ & $1.00 \pm 0.57$ & $0.94 \pm 0.41$ & $\boldsymbol{0.85 \pm 0.25}$ & $1.13 \pm 0.39$ & $0.92 \pm 0.33$ & $1.27 \pm 0.22$ \\
\multirow{2}{*}{CoraFull} & GCN & $28.50 \pm 0.92$ & $3.40 \pm 0.44$ & $3.22 \pm 0.58$ & $\underline{2.71 \pm 0.39}$ & $3.94 \pm 0.52$ & $2.87 \pm 0.54$ & $3.38 \pm 0.41$ & $3.74 \pm 0.84$ & $\textcolor{red}{2.43 \pm 0.62}$ & $\boldsymbol{2.47 \pm 0.55}$ \\
 & GAT & $37.35 \pm 0.88$ & $\boldsymbol{2.11 \pm 0.52}$ & $2.47 \pm 0.42$ & $\textcolor{red}{1.68 \pm 0.31}$ & $3.94 \pm 0.73$ & $2.26 \pm 0.52$ & $\underline{2.17 \pm 0.61}$ & $3.71 \pm 1.10$ & $5.27 \pm 1.01$ & $2.99 \pm 0.93$ \\
\multirow{2}{*}{Computers} & GCN & $5.12 \pm 0.81$ & $2.33 \pm 0.48$ & $2.27 \pm 0.35$ & $2.26 \pm 0.58$ & $\underline{1.51 \pm 0.35}$ & $\textcolor{red}{1.20 \pm 0.24}$ & $1.80 \pm 0.59$ & $2.62 \pm 0.58$ & $\boldsymbol{1.23 \pm 0.35}$ & $1.62 \pm 0.40$ \\
 & GAT & $6.60 \pm 1.52$ & $2.04 \pm 0.37$ & $\boldsymbol{1.87 \pm 0.45}$ & $1.98 \pm 0.36$ & $2.06 \pm 0.57$ & $\textcolor{red}{1.57 \pm 0.33}$ & $2.01 \pm 0.33$ & $2.70 \pm 0.86$ & $2.01 \pm 0.53$ & $\underline{1.90 \pm 0.52}$ \\
\multirow{2}{*}{Photo} & GCN & $3.68 \pm 0.62$ & $1.24 \pm 0.32$ & $1.30 \pm 0.23$ & $1.24 \pm 0.34$ & $\textcolor{red}{0.96 \pm 0.37}$ & $1.29 \pm 0.47$ & $\underline{1.17 \pm 0.25}$ & $1.69 \pm 0.35$ & $1.18 \pm 0.41$ & $\boldsymbol{0.98 \pm 0.39}$ \\
 & GAT & $4.93 \pm 0.69$ & $\boldsymbol{1.22 \pm 0.30}$ & $\underline{1.24 \pm 0.27}$ & $1.37 \pm 0.59$ & $1.31 \pm 0.52$ & $1.41 \pm 0.36$ & $1.32 \pm 0.37$ & $\textcolor{red}{1.16 \pm 0.29}$ & $1.28 \pm 0.36$ & $1.44 \pm 0.34$ \\
\multirow{2}{*}{CS} & GCN & $1.58 \pm 0.19$ & $1.57 \pm 0.18$ & $\underline{1.44 \pm 0.18}$ & $1.49 \pm 0.18$ & $\boldsymbol{1.37 \pm 0.36}$ & $2.79 \pm 0.83$ & $1.57 \pm 0.22$ & $1.59 \pm 0.16$ & $3.53 \pm 0.43$ & $\textcolor{red}{1.08 \pm 0.22}$ \\
 & GAT & $3.86 \pm 0.51$ & $2.95 \pm 0.64$ & $2.62 \pm 0.46$ & $2.95 \pm 0.65$ & $\textcolor{red}{1.32 \pm 0.26}$ & $\underline{1.99 \pm 0.61}$ & $2.66 \pm 0.51$ & $3.69 \pm 3.70$ & $2.98 \pm 0.60$ & $\boldsymbol{1.94 \pm 0.55}$ \\
\multirow{2}{*}{Physics} & GCN & $0.65 \pm 0.16$ & $0.62 \pm 0.14$ & $0.64 \pm 0.15$ & $0.55 \pm 0.11$ & $\textcolor{red}{0.39 \pm 0.13}$ & $0.80 \pm 0.32$ & $\underline{0.47 \pm 0.11}$ & $0.64 \pm 0.24$ & $1.80 \pm 0.28$ & $\boldsymbol{0.41 \pm 0.13}$ \\
 & GAT & $1.75 \pm 0.38$ & $1.12 \pm 0.15$ & $1.05 \pm 0.14$ & $1.09 \pm 0.19$ & $\boldsymbol{0.66 \pm 0.16}$ & $\textcolor{red}{0.64 \pm 0.14}$ & $0.97 \pm 0.19$ & $1.37 \pm 0.71$ & $1.03 \pm 0.25$ & $\underline{0.67 \pm 0.23}$ \\
\midrule
\multicolumn{12}{l}{\textit{Heterophilic}} \\
\multirow{2}{*}{Texas} & GCN & $25.38 \pm 6.99$ & $25.08 \pm 7.10$ & $\underline{19.96 \pm 7.05}$ & $24.16 \pm 6.21$ & $\boldsymbol{19.30 \pm 7.09}$ & $23.18 \pm 6.36$ & $21.83 \pm 7.24$ & $20.14 \pm 4.94$ & $27.84 \pm 8.25$ & $\textcolor{red}{16.67 \pm 4.49}$ \\
 & GAT & $18.35 \pm 7.91$ & $18.19 \pm 8.36$ & $16.50 \pm 6.35$ & $\underline{14.38 \pm 7.43}$ & $16.08 \pm 5.88$ & $17.47 \pm 8.10$ & $\textcolor{red}{13.87 \pm 4.94}$ & $16.43 \pm 4.28$ & $21.72 \pm 6.86$ & $\boldsymbol{14.08 \pm 6.79}$ \\
\multirow{2}{*}{Cornell} & GCN & $22.49 \pm 9.37$ & $21.70 \pm 8.67$ & $22.18 \pm 5.77$ & $\underline{19.19 \pm 8.74}$ & $19.58 \pm 7.83$ & $\boldsymbol{16.61 \pm 6.67}$ & $20.04 \pm 8.45$ & $21.97 \pm 7.92$ & $24.96 \pm 7.76$ & $\textcolor{red}{13.31 \pm 5.35}$ \\
 & GAT & $17.76 \pm 5.75$ & $18.40 \pm 5.37$ & $17.11 \pm 6.36$ & $17.16 \pm 6.08$ & $\underline{16.53 \pm 6.57}$ & $\boldsymbol{15.26 \pm 7.26}$ & $17.11 \pm 7.03$ & $17.19 \pm 7.12$ & $25.38 \pm 5.83$ & $\textcolor{red}{15.24 \pm 6.11}$ \\
\multirow{2}{*}{Wisconsin} & GCN & $21.44 \pm 7.41$ & $21.40 \pm 6.80$ & $22.26 \pm 6.65$ & $\boldsymbol{18.03 \pm 6.18}$ & $20.66 \pm 6.40$ & $\underline{20.43 \pm 6.08}$ & $20.69 \pm 5.83$ & $22.12 \pm 6.92$ & $26.29 \pm 7.65$ & $\textcolor{red}{16.59 \pm 5.09}$ \\
 & GAT & $17.18 \pm 6.55$ & $16.72 \pm 6.79$ & $16.97 \pm 6.72$ & $\underline{15.60 \pm 7.18}$ & $\underline{15.60 \pm 6.16}$ & $17.11 \pm 6.99$ & $\boldsymbol{14.96 \pm 6.29}$ & $17.75 \pm 6.55$ & $22.85 \pm 6.59$ & $\textcolor{red}{12.75 \pm 4.96}$ \\
\multirow{2}{*}{Chameleon} & GCN & $12.73 \pm 2.15$ & $12.81 \pm 2.18$ & $12.05 \pm 2.81$ & $13.03 \pm 1.92$ & $\textcolor{red}{9.34 \pm 2.08}$ & $\underline{10.39 \pm 2.03}$ & $12.25 \pm 2.08$ & $13.28 \pm 2.41$ & $12.80 \pm 2.25$ & $\boldsymbol{9.44 \pm 2.76}$ \\
 & GAT & $10.27 \pm 2.23$ & $10.34 \pm 2.31$ & $10.43 \pm 2.50$ & $9.36 \pm 4.00$ & $\boldsymbol{8.56 \pm 1.34}$ & $\textcolor{red}{8.01 \pm 2.08}$ & $9.08 \pm 3.05$ & $8.72 \pm 2.71$ & $13.55 \pm 1.47$ & $\underline{8.67 \pm 1.78}$ \\
\multirow{2}{*}{Squirrel} & GCN & $8.09 \pm 1.53$ & $8.10 \pm 1.45$ & $8.19 \pm 1.58$ & $8.57 \pm 1.14$ & $\textcolor{red}{5.13 \pm 1.98}$ & $\boldsymbol{5.14 \pm 1.55}$ & $8.22 \pm 1.53$ & $10.11 \pm 1.33$ & $7.48 \pm 1.44$ & $\underline{5.69 \pm 1.22}$ \\
 & GAT & $4.29 \pm 1.49$ & $4.29 \pm 1.53$ & $4.15 \pm 1.48$ & $4.16 \pm 1.61$ & $\underline{3.81 \pm 1.78}$ & $\boldsymbol{3.62 \pm 1.25}$ & $\textcolor{red}{3.52 \pm 1.26}$ & $4.50 \pm 0.98$ & $5.61 \pm 2.26$ & $4.24 \pm 1.46$ \\
\multirow{2}{*}{Actor} & GCN & $2.23 \pm 0.85$ & $2.21 \pm 0.87$ & $\textcolor{red}{1.88 \pm 0.60}$ & $\boldsymbol{2.11 \pm 0.93}$ & $2.33 \pm 1.04$ & $2.16 \pm 0.94$ & $\boldsymbol{2.11 \pm 1.06}$ & $2.97 \pm 1.53$ & $5.37 \pm 2.17$ & $2.33 \pm 1.06$ \\
 & GAT & $2.71 \pm 1.04$ & $2.73 \pm 0.73$ & $\textcolor{red}{2.04 \pm 0.88}$ & $2.82 \pm 0.77$ & $3.04 \pm 0.84$ & $\boldsymbol{2.57 \pm 0.74}$ & $2.92 \pm 0.76$ & $5.79 \pm 1.65$ & $4.18 \pm 1.33$ & $\underline{2.65 \pm 0.53}$ \\
\multirow{2}{*}{Roman-Empire} & GCN & $12.44 \pm 0.80$ & $\underline{2.39 \pm 0.55}$ & $\textcolor{red}{1.57 \pm 0.53}$ & $\boldsymbol{2.32 \pm 0.58}$ & $5.02 \pm 0.65$ & $2.42 \pm 0.55$ & $3.05 \pm 0.58$ & $2.76 \pm 0.60$ & $3.21 \pm 1.23$ & $4.45 \pm 0.59$ \\
 & GAT & $12.10 \pm 1.29$ & $2.89 \pm 0.66$ & $\boldsymbol{2.64 \pm 0.73}$ & $3.12 \pm 0.73$ & $3.10 \pm 0.80$ & $3.17 \pm 0.91$ & $\textcolor{red}{2.63 \pm 0.64}$ & $3.99 \pm 0.82$ & $\underline{2.66 \pm 0.71}$ & $4.26 \pm 0.93$ \\
\multirow{2}{*}{tolokers} & GCN & $3.72 \pm 0.76$ & $3.77 \pm 0.81$ & $3.75 \pm 0.81$ & $3.80 \pm 0.77$ & $3.40 \pm 1.00$ & $\boldsymbol{3.24 \pm 0.69}$ & $\underline{3.33 \pm 1.10}$ & $\textcolor{red}{2.29 \pm 0.68}$ & $4.13 \pm 0.75$ & $3.52 \pm 0.81$ \\
 & GAT & $4.88 \pm 1.06$ & $4.67 \pm 0.60$ & $3.57 \pm 0.79$ & $4.65 \pm 0.61$ & $\underline{2.45 \pm 0.58}$ & $\boldsymbol{2.09 \pm 0.42}$ & $4.18 \pm 0.42$ & $\textcolor{red}{1.79 \pm 0.45}$ & $4.38 \pm 0.50$ & $2.98 \pm 0.65$ \\
\midrule
\multicolumn{12}{l}{\textit{Large}} \\
\multirow{2}{*}{ogbn-arxiv} & GCN & $3.12 \pm 0.24$ & $3.14 \pm 0.22$ & $3.03 \pm 0.13$ & $3.08 \pm 0.23$ & $\boldsymbol{1.19 \pm 0.13}$ & $\underline{2.02 \pm 0.37}$ & $3.37 \pm 0.17$ & $5.09 \pm 0.56$ & $2.53 \pm 0.64$ & $\textcolor{red}{1.15 \pm 0.11}$ \\
 & GAT & $7.01 \pm 0.52$ & $2.16 \pm 0.13$ & $2.41 \pm 0.13$ & $2.06 \pm 0.17$ & $\textcolor{red}{1.40 \pm 0.15}$ & $\boldsymbol{1.62 \pm 0.20}$ & $1.89 \pm 0.25$ & $3.92 \pm 0.65$ & $\underline{1.85 \pm 0.17}$ & $1.91 \pm 0.17$ \\
\multirow{2}{*}{Reddit} & GCN & $6.93 \pm 0.14$ & $2.35 \pm 0.10$ & $2.39 \pm 0.08$ & $2.22 \pm 0.17$ & $\textcolor{red}{1.19 \pm 0.06}$ & $\underline{1.41 \pm 0.10}$ & -- & $3.22 \pm 0.18$ & $1.69 \pm 0.18$ & $\boldsymbol{1.35 \pm 0.13}$ \\
 & GAT & $3.77 \pm 0.20$ & $1.02 \pm 0.12$ & $1.28 \pm 0.20$ & $1.24 \pm 0.19$ & $1.09 \pm 0.16$ & $\textcolor{red}{0.48 \pm 0.10}$ & -- & $\boldsymbol{0.61 \pm 0.08}$ & $\underline{0.80 \pm 0.10}$ & $1.15 \pm 0.17$ \\
\bottomrule
\end{tabular}%
}
\end{table}

\clearpage
\begin{table}[t]
\centering
\setlength{\tabcolsep}{2pt}
\caption{Per-backbone degree-stratified ECE (\%, mean$\pm$std). Best per (dataset, backbone) row in \textcolor{red}{red}, second in \textbf{bold}, third \underline{underlined} (excluding Uncal).}
\label{tab:app_deg_ece}
\resizebox{\textwidth}{!}{%
\begin{tabular}{llcccccccccc}
\toprule
Dataset & Backbone & Uncal & TS & VS & ETS & HTS & CaGCN & GATS & GETS & WATS & HoTS \\
\midrule
\multicolumn{12}{l}{\textit{Homophilic}} \\
\multirow{2}{*}{Cora} & GCN & $21.47 \pm 1.15$ & $2.79 \pm 0.53$ & $\underline{2.72 \pm 0.82}$ & $2.85 \pm 0.58$ & $\boldsymbol{2.70 \pm 0.43}$ & $3.10 \pm 0.51$ & $2.89 \pm 0.57$ & $3.23 \pm 0.54$ & $2.83 \pm 0.54$ & $\textcolor{red}{2.59 \pm 0.41}$ \\
 & GAT & $17.79 \pm 2.37$ & $\boldsymbol{2.17 \pm 0.71}$ & $\textcolor{red}{2.09 \pm 0.73}$ & $2.35 \pm 0.75$ & $2.30 \pm 0.51$ & $2.72 \pm 0.73$ & $\boldsymbol{2.17 \pm 0.79}$ & $2.53 \pm 0.48$ & $2.25 \pm 0.57$ & $\boldsymbol{2.17 \pm 0.59}$ \\
\multirow{2}{*}{CiteSeer} & GCN & $21.86 \pm 3.45$ & $\textcolor{red}{2.71 \pm 0.43}$ & $\underline{2.76 \pm 0.59}$ & $\textcolor{red}{2.71 \pm 0.55}$ & $3.23 \pm 0.80$ & $3.32 \pm 0.54$ & $3.15 \pm 0.75$ & $4.04 \pm 0.91$ & $2.85 \pm 0.54$ & $2.91 \pm 0.93$ \\
 & GAT & $15.24 \pm 1.18$ & $\boldsymbol{2.58 \pm 0.49}$ & $2.83 \pm 0.52$ & $\textcolor{red}{2.46 \pm 0.39}$ & $3.36 \pm 0.76$ & $2.77 \pm 0.48$ & $2.94 \pm 0.87$ & $3.04 \pm 0.88$ & $\underline{2.67 \pm 0.39}$ & $3.13 \pm 0.45$ \\
\multirow{2}{*}{PubMed} & GCN & $13.94 \pm 1.52$ & $2.58 \pm 0.34$ & $2.92 \pm 0.34$ & $2.72 \pm 0.33$ & $\boldsymbol{1.83 \pm 0.36}$ & $\underline{1.84 \pm 1.47}$ & $2.19 \pm 0.39$ & $2.16 \pm 0.57$ & $\textcolor{red}{0.92 \pm 0.18}$ & $1.98 \pm 0.25$ \\
 & GAT & $10.40 \pm 0.83$ & $0.82 \pm 0.26$ & $\textcolor{red}{0.75 \pm 0.13}$ & $\boldsymbol{0.79 \pm 0.23}$ & $\underline{0.80 \pm 0.19}$ & $0.96 \pm 0.49$ & $\underline{0.80 \pm 0.29}$ & $1.02 \pm 0.38$ & $0.87 \pm 0.32$ & $0.84 \pm 0.29$ \\
\multirow{2}{*}{CoraFull} & GCN & $28.50 \pm 0.92$ & $4.83 \pm 0.46$ & $3.24 \pm 0.60$ & $4.89 \pm 0.46$ & $\underline{2.64 \pm 0.64}$ & $3.75 \pm 0.44$ & $4.89 \pm 0.46$ & $3.48 \pm 0.58$ & $\boldsymbol{2.33 \pm 0.55}$ & $\textcolor{red}{1.56 \pm 0.47}$ \\
 & GAT & $37.35 \pm 0.88$ & $\boldsymbol{1.81 \pm 0.57}$ & $2.36 \pm 0.24$ & $\textcolor{red}{1.42 \pm 0.36}$ & $3.06 \pm 0.97$ & $2.75 \pm 0.89$ & $\underline{1.94 \pm 0.64}$ & $2.30 \pm 1.30$ & $5.23 \pm 1.03$ & $2.15 \pm 0.96$ \\
\multirow{2}{*}{Computers} & GCN & $6.33 \pm 0.45$ & $4.02 \pm 0.52$ & $3.11 \pm 0.39$ & $4.06 \pm 0.52$ & $\underline{2.19 \pm 0.44}$ & $\boldsymbol{1.82 \pm 0.37}$ & $3.23 \pm 0.95$ & $3.36 \pm 0.64$ & $\textcolor{red}{1.40 \pm 0.32}$ & $2.76 \pm 0.48$ \\
 & GAT & $6.59 \pm 1.52$ & $2.19 \pm 0.27$ & $2.62 \pm 0.49$ & $2.19 \pm 0.25$ & $\underline{2.14 \pm 0.28}$ & $2.27 \pm 0.31$ & $\boldsymbol{2.12 \pm 0.29}$ & $2.83 \pm 0.94$ & $2.34 \pm 0.28$ & $\textcolor{red}{2.09 \pm 0.27}$ \\
\multirow{2}{*}{Photo} & GCN & $4.23 \pm 0.60$ & $1.97 \pm 0.28$ & $1.89 \pm 0.35$ & $1.99 \pm 0.31$ & $\boldsymbol{1.46 \pm 0.44}$ & $1.71 \pm 0.49$ & $1.94 \pm 0.25$ & $2.22 \pm 0.37$ & $\textcolor{red}{1.37 \pm 0.26}$ & $\underline{1.47 \pm 0.38}$ \\
 & GAT & $4.88 \pm 0.72$ & $1.67 \pm 0.41$ & $1.74 \pm 0.28$ & $1.86 \pm 0.60$ & $1.82 \pm 0.47$ & $\boldsymbol{1.49 \pm 0.33}$ & $1.72 \pm 0.46$ & $\textcolor{red}{1.35 \pm 0.22}$ & $\underline{1.55 \pm 0.34}$ & $1.72 \pm 0.40$ \\
\multirow{2}{*}{CS} & GCN & $1.72 \pm 0.21$ & $1.67 \pm 0.20$ & $1.66 \pm 0.17$ & $\underline{1.64 \pm 0.19}$ & $\boldsymbol{1.40 \pm 0.31}$ & $2.52 \pm 0.78$ & $1.72 \pm 0.19$ & $1.66 \pm 0.16$ & $3.45 \pm 0.49$ & $\textcolor{red}{1.15 \pm 0.20}$ \\
 & GAT & $3.58 \pm 0.73$ & $2.45 \pm 0.70$ & $2.25 \pm 0.52$ & $2.51 \pm 0.81$ & $\boldsymbol{2.15 \pm 0.37}$ & $\textcolor{red}{1.19 \pm 0.35}$ & $\boldsymbol{2.15 \pm 0.54}$ & $3.19 \pm 3.17$ & $2.56 \pm 0.78$ & $2.52 \pm 0.59$ \\
\multirow{2}{*}{Physics} & GCN & $0.98 \pm 0.10$ & $0.98 \pm 0.12$ & $0.97 \pm 0.11$ & $\underline{0.93 \pm 0.15}$ & $\boldsymbol{0.80 \pm 0.09}$ & $\underline{0.93 \pm 0.40}$ & $0.99 \pm 0.11$ & $1.00 \pm 0.14$ & $1.77 \pm 0.31$ & $\textcolor{red}{0.79 \pm 0.09}$ \\
 & GAT & $1.72 \pm 0.41$ & $0.80 \pm 0.26$ & $0.95 \pm 0.24$ & $0.90 \pm 0.27$ & $0.72 \pm 0.12$ & $\textcolor{red}{0.45 \pm 0.19}$ & $0.74 \pm 0.27$ & $0.76 \pm 0.57$ & $\boldsymbol{0.67 \pm 0.27}$ & $\underline{0.68 \pm 0.20}$ \\
\midrule
\multicolumn{12}{l}{\textit{Heterophilic}} \\
\multirow{2}{*}{Texas} & GCN & $27.24 \pm 7.36$ & $27.23 \pm 7.36$ & $25.64 \pm 5.45$ & $26.84 \pm 7.54$ & $\boldsymbol{24.57 \pm 7.08}$ & $25.94 \pm 7.40$ & $25.66 \pm 7.36$ & $\underline{25.22 \pm 3.60}$ & $28.26 \pm 7.01$ & $\textcolor{red}{23.07 \pm 6.59}$ \\
 & GAT & $22.51 \pm 6.55$ & $22.41 \pm 6.43$ & $22.33 \pm 5.74$ & $\underline{20.57 \pm 5.13}$ & $21.64 \pm 5.84$ & $21.53 \pm 7.23$ & $\boldsymbol{20.29 \pm 5.81}$ & $\textcolor{red}{19.99 \pm 2.65}$ & $23.93 \pm 5.91$ & $21.86 \pm 5.44$ \\
\multirow{2}{*}{Cornell} & GCN & $22.47 \pm 8.53$ & $21.50 \pm 7.66$ & $20.96 \pm 7.38$ & $19.38 \pm 7.61$ & $20.76 \pm 7.61$ & $\boldsymbol{18.06 \pm 7.45}$ & $19.84 \pm 6.95$ & $\underline{18.34 \pm 7.05}$ & $23.67 \pm 6.87$ & $\textcolor{red}{17.56 \pm 6.45}$ \\
 & GAT & $19.97 \pm 6.63$ & $20.04 \pm 6.68$ & $19.84 \pm 6.57$ & $19.40 \pm 6.37$ & $18.57 \pm 7.09$ & $\textcolor{red}{16.89 \pm 6.27}$ & $\underline{18.44 \pm 6.95}$ & $18.49 \pm 4.90$ & $24.89 \pm 5.57$ & $\boldsymbol{17.98 \pm 6.62}$ \\
\multirow{2}{*}{Wisconsin} & GCN & $19.96 \pm 7.01$ & $20.10 \pm 6.68$ & $21.57 \pm 4.96$ & $\boldsymbol{17.37 \pm 5.35}$ & $\underline{18.31 \pm 5.21}$ & $19.37 \pm 5.98$ & $18.37 \pm 5.07$ & $18.95 \pm 5.23$ & $24.29 \pm 8.28$ & $\textcolor{red}{16.52 \pm 4.10}$ \\
 & GAT & $18.88 \pm 6.28$ & $18.81 \pm 6.27$ & $19.18 \pm 6.14$ & $17.93 \pm 6.29$ & $17.28 \pm 5.32$ & $\boldsymbol{16.99 \pm 6.08}$ & $\underline{17.22 \pm 5.15}$ & $18.17 \pm 7.40$ & $23.67 \pm 5.96$ & $\textcolor{red}{15.96 \pm 3.34}$ \\
\multirow{2}{*}{Chameleon} & GCN & $12.07 \pm 2.00$ & $12.07 \pm 2.00$ & $11.98 \pm 2.08$ & $12.09 \pm 1.93$ & $11.88 \pm 1.59$ & $\textcolor{red}{10.60 \pm 1.47}$ & $\underline{11.79 \pm 1.84}$ & $\boldsymbol{11.75 \pm 2.21}$ & $13.62 \pm 2.16$ & $12.22 \pm 1.62$ \\
 & GAT & $12.56 \pm 2.07$ & $12.52 \pm 2.09$ & $12.33 \pm 2.01$ & $12.56 \pm 2.61$ & $12.29 \pm 2.15$ & $\boldsymbol{10.91 \pm 2.10}$ & $\underline{12.21 \pm 2.38}$ & $\textcolor{red}{10.88 \pm 2.01}$ & $15.28 \pm 2.80$ & $12.74 \pm 2.14$ \\
\multirow{2}{*}{Squirrel} & GCN & $6.27 \pm 1.20$ & $6.33 \pm 1.20$ & $6.42 \pm 1.00$ & $7.65 \pm 0.99$ & $\boldsymbol{5.45 \pm 1.09}$ & $\underline{5.51 \pm 1.08}$ & $7.09 \pm 1.09$ & $\textcolor{red}{5.36 \pm 1.03}$ & $6.96 \pm 1.97$ & $5.78 \pm 1.12$ \\
 & GAT & $6.48 \pm 1.97$ & $6.50 \pm 1.97$ & $6.78 \pm 1.63$ & $6.70 \pm 2.05$ & $6.35 \pm 2.10$ & $\boldsymbol{6.32 \pm 2.10}$ & $6.41 \pm 2.07$ & $\textcolor{red}{5.78 \pm 1.42}$ & $7.06 \pm 2.21$ & $\underline{6.34 \pm 2.15}$ \\
\multirow{2}{*}{Actor} & GCN & $2.36 \pm 0.69$ & $2.44 \pm 0.65$ & $\textcolor{red}{2.21 \pm 0.52}$ & $2.37 \pm 0.68$ & $2.49 \pm 0.74$ & $\boldsymbol{2.25 \pm 0.73}$ & $2.41 \pm 0.68$ & $2.96 \pm 1.38$ & $5.41 \pm 2.05$ & $\underline{2.27 \pm 0.72}$ \\
 & GAT & $1.78 \pm 0.35$ & $\underline{1.87 \pm 0.33}$ & $\textcolor{red}{1.62 \pm 0.48}$ & $1.88 \pm 0.27$ & $1.93 \pm 0.31$ & $1.91 \pm 0.42$ & $1.88 \pm 0.34$ & $4.03 \pm 1.85$ & $3.64 \pm 1.07$ & $\boldsymbol{1.79 \pm 0.40}$ \\
\multirow{2}{*}{Roman-Empire} & GCN & $12.57 \pm 0.67$ & $\boldsymbol{3.12 \pm 0.44}$ & $4.48 \pm 0.51$ & $\underline{3.15 \pm 0.51}$ & $3.40 \pm 0.49$ & $\textcolor{red}{3.01 \pm 0.47}$ & $3.32 \pm 0.53$ & $4.15 \pm 0.56$ & $3.65 \pm 0.85$ & $3.74 \pm 0.63$ \\
 & GAT & $12.08 \pm 1.28$ & $6.96 \pm 0.35$ & $\underline{6.12 \pm 0.52}$ & $7.04 \pm 0.34$ & $6.88 \pm 0.34$ & $\boldsymbol{5.42 \pm 0.28}$ & $6.68 \pm 0.31$ & $\textcolor{red}{2.45 \pm 0.67}$ & $6.70 \pm 0.28$ & $7.08 \pm 0.37$ \\
\multirow{2}{*}{tolokers} & GCN & $2.26 \pm 0.23$ & $\underline{2.31 \pm 0.36}$ & $2.34 \pm 0.27$ & $\boldsymbol{2.30 \pm 0.35}$ & $2.33 \pm 0.28$ & $3.19 \pm 0.67$ & $2.51 \pm 0.50$ & $2.75 \pm 0.57$ & $\textcolor{red}{2.24 \pm 0.37}$ & $2.47 \pm 0.21$ \\
 & GAT & $4.22 \pm 0.82$ & $3.62 \pm 0.29$ & $4.00 \pm 0.42$ & $3.62 \pm 0.29$ & $3.65 \pm 0.30$ & $\textcolor{red}{2.17 \pm 0.22}$ & $3.68 \pm 0.30$ & $\boldsymbol{2.56 \pm 0.33}$ & $\underline{2.83 \pm 0.35}$ & $3.60 \pm 0.43$ \\
\midrule
\multicolumn{12}{l}{\textit{Large}} \\
\multirow{2}{*}{ogbn-arxiv} & GCN & $1.67 \pm 0.20$ & $1.71 \pm 0.18$ & $\underline{1.62 \pm 0.12}$ & $1.71 \pm 0.18$ & $\boldsymbol{1.44 \pm 0.08}$ & $\textcolor{red}{1.37 \pm 0.28}$ & $2.50 \pm 0.26$ & $4.59 \pm 0.64$ & $1.93 \pm 0.57$ & $1.74 \pm 0.12$ \\
 & GAT & $6.95 \pm 0.46$ & $3.31 \pm 0.17$ & $3.86 \pm 0.12$ & $\underline{3.29 \pm 0.16}$ & $4.11 \pm 0.15$ & $\textcolor{red}{2.02 \pm 0.15}$ & $3.30 \pm 0.17$ & $3.38 \pm 0.75$ & $\boldsymbol{2.35 \pm 0.24}$ & $4.28 \pm 0.15$ \\
\multirow{2}{*}{Reddit} & GCN & $6.93 \pm 0.14$ & $1.86 \pm 0.10$ & $2.10 \pm 0.12$ & $1.91 \pm 0.10$ & $\textcolor{red}{1.21 \pm 0.06}$ & $1.52 \pm 0.11$ & -- & $2.31 \pm 0.26$ & $\underline{1.41 \pm 0.14}$ & $\boldsymbol{1.33 \pm 0.12}$ \\
 & GAT & $4.94 \pm 0.18$ & $3.24 \pm 0.10$ & $3.38 \pm 0.10$ & $3.44 \pm 0.10$ & $3.46 \pm 0.11$ & $\textcolor{red}{0.67 \pm 0.09}$ & -- & $\underline{1.16 \pm 0.15}$ & $\boldsymbol{0.83 \pm 0.16}$ & $3.53 \pm 0.12$ \\
\bottomrule
\end{tabular}%
}
\end{table}

\begin{table}[t]
\centering
\setlength{\tabcolsep}{2pt}
\caption{Per-backbone NLL (mean$\pm$std). Best per (dataset, backbone) row in \textcolor{red}{red}, second in \textbf{bold}, third \underline{underlined} (excluding Uncal).}
\label{tab:app_nll}
\resizebox{\textwidth}{!}{%
\begin{tabular}{llcccccccccc}
\toprule
Dataset & Backbone & Uncal & TS & VS & ETS & HTS & CaGCN & GATS & GETS & WATS & HoTS \\
\midrule
\multicolumn{12}{l}{\textit{Homophilic}} \\
\multirow{2}{*}{Cora} & GCN & $0.681 \pm 0.025$ & $0.518 \pm 0.031$ & $\textcolor{red}{0.481 \pm 0.026}$ & $0.512 \pm 0.030$ & $\underline{0.503 \pm 0.028}$ & $0.572 \pm 0.053$ & $0.518 \pm 0.030$ & $0.528 \pm 0.032$ & $0.516 \pm 0.030$ & $\boldsymbol{0.497 \pm 0.028}$ \\
 & GAT & $0.636 \pm 0.036$ & $0.516 \pm 0.027$ & $\boldsymbol{0.506 \pm 0.022}$ & $0.515 \pm 0.026$ & $\underline{0.513 \pm 0.024}$ & $0.553 \pm 0.026$ & $0.520 \pm 0.026$ & $0.523 \pm 0.021$ & $0.516 \pm 0.027$ & $\textcolor{red}{0.504 \pm 0.023}$ \\
\multirow{2}{*}{CiteSeer} & GCN & $0.977 \pm 0.066$ & $\underline{0.849 \pm 0.046}$ & $\textcolor{red}{0.809 \pm 0.023}$ & $\boldsymbol{0.844 \pm 0.042}$ & $0.853 \pm 0.052$ & $0.878 \pm 0.042$ & $0.853 \pm 0.046$ & $0.884 \pm 0.039$ & $\underline{0.849 \pm 0.048}$ & $0.859 \pm 0.050$ \\
 & GAT & $0.878 \pm 0.009$ & $\underline{0.801 \pm 0.012}$ & $\textcolor{red}{0.790 \pm 0.011}$ & $\boldsymbol{0.800 \pm 0.012}$ & $0.803 \pm 0.013$ & $0.832 \pm 0.017$ & $0.807 \pm 0.011$ & $0.832 \pm 0.016$ & $\underline{0.801 \pm 0.012}$ & $0.805 \pm 0.013$ \\
\multirow{2}{*}{PubMed} & GCN & $0.459 \pm 0.017$ & $0.380 \pm 0.005$ & $0.377 \pm 0.005$ & $0.377 \pm 0.005$ & $0.376 \pm 0.006$ & $\boldsymbol{0.368 \pm 0.017}$ & $0.375 \pm 0.006$ & $\underline{0.371 \pm 0.005}$ & $\textcolor{red}{0.365 \pm 0.005}$ & $0.376 \pm 0.005$ \\
 & GAT & $0.430 \pm 0.005$ & $0.378 \pm 0.006$ & $\boldsymbol{0.375 \pm 0.005}$ & $0.378 \pm 0.006$ & $0.379 \pm 0.006$ & $\boldsymbol{0.375 \pm 0.005}$ & $0.379 \pm 0.006$ & $\textcolor{red}{0.373 \pm 0.005}$ & $0.379 \pm 0.006$ & $0.378 \pm 0.005$ \\
\multirow{2}{*}{CoraFull} & GCN & $1.947 \pm 0.016$ & $1.590 \pm 0.030$ & $\textcolor{red}{1.324 \pm 0.017}$ & $1.582 \pm 0.029$ & $1.572 \pm 0.029$ & $1.563 \pm 0.030$ & $1.590 \pm 0.030$ & $\boldsymbol{1.395 \pm 0.034}$ & $\underline{1.562 \pm 0.030}$ & $1.569 \pm 0.028$ \\
 & GAT & $2.256 \pm 0.022$ & $1.667 \pm 0.026$ & $\textcolor{red}{1.258 \pm 0.023}$ & $1.662 \pm 0.026$ & $1.684 \pm 0.028$ & $\underline{1.648 \pm 0.035}$ & $1.668 \pm 0.026$ & $\boldsymbol{1.439 \pm 0.039}$ & $1.689 \pm 0.029$ & $1.673 \pm 0.028$ \\
\multirow{2}{*}{Computers} & GCN & $0.400 \pm 0.012$ & $0.389 \pm 0.015$ & $\underline{0.367 \pm 0.011}$ & $0.381 \pm 0.015$ & $0.373 \pm 0.014$ & $\boldsymbol{0.363 \pm 0.013}$ & $0.381 \pm 0.020$ & $0.375 \pm 0.007$ & $\textcolor{red}{0.360 \pm 0.013}$ & $0.371 \pm 0.017$ \\
 & GAT & $0.480 \pm 0.014$ & $0.455 \pm 0.020$ & $\textcolor{red}{0.400 \pm 0.007}$ & $0.453 \pm 0.021$ & $0.452 \pm 0.019$ & $\underline{0.442 \pm 0.017}$ & $0.457 \pm 0.021$ & $\boldsymbol{0.406 \pm 0.016}$ & $0.453 \pm 0.020$ & $0.450 \pm 0.020$ \\
\multirow{2}{*}{Photo} & GCN & $0.251 \pm 0.010$ & $0.242 \pm 0.013$ & $\textcolor{red}{0.235 \pm 0.010}$ & $\underline{0.239 \pm 0.013}$ & $0.240 \pm 0.012$ & $0.246 \pm 0.015$ & $0.242 \pm 0.013$ & $0.244 \pm 0.011$ & $\underline{0.239 \pm 0.018}$ & $\boldsymbol{0.238 \pm 0.009}$ \\
 & GAT & $0.308 \pm 0.015$ & $0.297 \pm 0.018$ & $\boldsymbol{0.278 \pm 0.014}$ & $\underline{0.285 \pm 0.015}$ & $0.297 \pm 0.018$ & $0.291 \pm 0.020$ & $0.302 \pm 0.019$ & $\textcolor{red}{0.270 \pm 0.013}$ & $0.294 \pm 0.026$ & $0.297 \pm 0.018$ \\
\multirow{2}{*}{CS} & GCN & $0.275 \pm 0.008$ & $0.276 \pm 0.008$ & $\underline{0.275 \pm 0.008}$ & $\underline{0.275 \pm 0.008}$ & $\textcolor{red}{0.268 \pm 0.008}$ & $0.340 \pm 0.048$ & $0.276 \pm 0.009$ & $0.278 \pm 0.008$ & $0.308 \pm 0.017$ & $\textcolor{red}{0.268 \pm 0.008}$ \\
 & GAT & $0.385 \pm 0.011$ & $0.384 \pm 0.014$ & $0.375 \pm 0.011$ & $0.382 \pm 0.015$ & $\textcolor{red}{0.368 \pm 0.013}$ & $\textcolor{red}{0.368 \pm 0.027}$ & $0.386 \pm 0.015$ & $0.421 \pm 0.145$ & $0.414 \pm 0.026$ & $\textcolor{red}{0.368 \pm 0.013}$ \\
\multirow{2}{*}{Physics} & GCN & $0.140 \pm 0.004$ & $0.140 \pm 0.004$ & $\underline{0.139 \pm 0.004}$ & $\underline{0.139 \pm 0.004}$ & $\textcolor{red}{0.138 \pm 0.004}$ & $0.149 \pm 0.015$ & $\underline{0.139 \pm 0.004}$ & $0.141 \pm 0.007$ & $0.153 \pm 0.007$ & $\textcolor{red}{0.138 \pm 0.004}$ \\
 & GAT & $0.184 \pm 0.003$ & $0.182 \pm 0.004$ & $\underline{0.177 \pm 0.003}$ & $0.178 \pm 0.005$ & $\boldsymbol{0.176 \pm 0.004}$ & $\textcolor{red}{0.167 \pm 0.003}$ & $0.181 \pm 0.005$ & $0.192 \pm 0.051$ & $0.184 \pm 0.008$ & $\underline{0.177 \pm 0.004}$ \\
\midrule
\multicolumn{12}{l}{\textit{Heterophilic}} \\
\multirow{2}{*}{Texas} & GCN & $1.662 \pm 0.354$ & $1.658 \pm 0.349$ & $1.566 \pm 0.396$ & $\underline{1.517 \pm 0.145}$ & $\underline{1.517 \pm 0.231}$ & $1.587 \pm 0.179$ & $\boldsymbol{1.514 \pm 0.166}$ & $1.701 \pm 0.279$ & $1.911 \pm 0.532$ & $\textcolor{red}{1.402 \pm 0.088}$ \\
 & GAT & $1.452 \pm 0.266$ & $1.443 \pm 0.261$ & $1.396 \pm 0.223$ & $\underline{1.286 \pm 0.174}$ & $1.301 \pm 0.138$ & $1.581 \pm 0.745$ & $\boldsymbol{1.272 \pm 0.099}$ & $1.368 \pm 0.234$ & $1.644 \pm 0.350$ & $\textcolor{red}{1.227 \pm 0.083}$ \\
\multirow{2}{*}{Cornell} & GCN & $1.707 \pm 0.169$ & $1.687 \pm 0.148$ & $1.633 \pm 0.175$ & $\boldsymbol{1.563 \pm 0.068}$ & $\underline{1.595 \pm 0.080}$ & $1.711 \pm 0.309$ & $1.653 \pm 0.218$ & $2.012 \pm 0.520$ & $1.864 \pm 0.275$ & $\textcolor{red}{1.514 \pm 0.089}$ \\
 & GAT & $1.595 \pm 0.401$ & $1.595 \pm 0.392$ & $1.580 \pm 0.400$ & $\boldsymbol{1.459 \pm 0.139}$ & $\underline{1.532 \pm 0.235}$ & $1.638 \pm 0.559$ & $\underline{1.532 \pm 0.182}$ & $1.624 \pm 0.273$ & $1.811 \pm 0.580$ & $\textcolor{red}{1.426 \pm 0.087}$ \\
\multirow{2}{*}{Wisconsin} & GCN & $1.674 \pm 0.317$ & $1.673 \pm 0.309$ & $1.617 \pm 0.328$ & $\boldsymbol{1.508 \pm 0.079}$ & $\underline{1.537 \pm 0.163}$ & $1.768 \pm 0.480$ & $1.580 \pm 0.170$ & $1.616 \pm 0.185$ & $2.026 \pm 0.668$ & $\textcolor{red}{1.454 \pm 0.082}$ \\
 & GAT & $1.656 \pm 0.597$ & $1.649 \pm 0.589$ & $1.644 \pm 0.590$ & $1.527 \pm 0.610$ & $\underline{1.480 \pm 0.341}$ & $1.606 \pm 0.478$ & $\boldsymbol{1.473 \pm 0.382}$ & $1.667 \pm 0.579$ & $2.114 \pm 1.071$ & $\textcolor{red}{1.321 \pm 0.041}$ \\
\multirow{2}{*}{Chameleon} & GCN & $1.369 \pm 0.023$ & $1.368 \pm 0.023$ & $1.368 \pm 0.020$ & $\underline{1.333 \pm 0.021}$ & $\boldsymbol{1.327 \pm 0.035}$ & $\underline{1.333 \pm 0.056}$ & $1.349 \pm 0.030$ & $1.457 \pm 0.164$ & $1.480 \pm 0.063$ & $\textcolor{red}{1.317 \pm 0.029}$ \\
 & GAT & $1.270 \pm 0.031$ & $1.268 \pm 0.031$ & $1.268 \pm 0.031$ & $\boldsymbol{1.206 \pm 0.063}$ & $1.221 \pm 0.044$ & $\textcolor{red}{1.198 \pm 0.045}$ & $1.221 \pm 0.057$ & $1.238 \pm 0.065$ & $1.421 \pm 0.087$ & $\underline{1.216 \pm 0.048}$ \\
\multirow{2}{*}{Squirrel} & GCN & $1.474 \pm 0.014$ & $1.474 \pm 0.014$ & $1.472 \pm 0.014$ & $1.456 \pm 0.020$ & $\boldsymbol{1.442 \pm 0.014}$ & $\textcolor{red}{1.432 \pm 0.013}$ & $1.460 \pm 0.014$ & $1.716 \pm 0.132$ & $1.526 \pm 0.035$ & $\underline{1.445 \pm 0.015}$ \\
 & GAT & $1.536 \pm 0.016$ & $1.536 \pm 0.016$ & $1.536 \pm 0.016$ & $\textcolor{red}{1.527 \pm 0.019}$ & $\underline{1.531 \pm 0.016}$ & $1.532 \pm 0.015$ & $\underline{1.531 \pm 0.017}$ & $1.565 \pm 0.030$ & $1.564 \pm 0.022$ & $\boldsymbol{1.529 \pm 0.019}$ \\
\multirow{2}{*}{Actor} & GCN & $1.548 \pm 0.006$ & $\underline{1.548 \pm 0.006}$ & $\boldsymbol{1.547 \pm 0.005}$ & $\underline{1.548 \pm 0.006}$ & $\underline{1.548 \pm 0.006}$ & $\underline{1.548 \pm 0.007}$ & $1.549 \pm 0.006$ & $\textcolor{red}{1.542 \pm 0.011}$ & $1.564 \pm 0.012$ & $1.551 \pm 0.006$ \\
 & GAT & $1.557 \pm 0.004$ & $1.557 \pm 0.004$ & $\boldsymbol{1.547 \pm 0.005}$ & $1.556 \pm 0.004$ & $1.556 \pm 0.004$ & $\underline{1.555 \pm 0.003}$ & $1.556 \pm 0.004$ & $\textcolor{red}{1.524 \pm 0.009}$ & $1.567 \pm 0.014$ & $1.557 \pm 0.004$ \\
\multirow{2}{*}{Roman-Empire} & GCN & $1.627 \pm 0.024$ & $1.559 \pm 0.020$ & $\boldsymbol{1.494 \pm 0.016}$ & $1.559 \pm 0.020$ & $1.567 \pm 0.021$ & $\underline{1.544 \pm 0.020}$ & $1.566 \pm 0.019$ & $\textcolor{red}{1.455 \pm 0.037}$ & $1.562 \pm 0.019$ & $1.565 \pm 0.020$ \\
 & GAT & $2.019 \pm 0.032$ & $1.969 \pm 0.035$ & $\boldsymbol{1.898 \pm 0.039}$ & $\underline{1.968 \pm 0.035}$ & $1.972 \pm 0.035$ & $1.981 \pm 0.033$ & $\underline{1.968 \pm 0.035}$ & $\textcolor{red}{1.463 \pm 0.039}$ & $1.974 \pm 0.037$ & $1.972 \pm 0.035$ \\
\multirow{2}{*}{tolokers} & GCN & $0.469 \pm 0.005$ & $0.468 \pm 0.005$ & $0.468 \pm 0.005$ & $0.468 \pm 0.005$ & $0.467 \pm 0.005$ & $\boldsymbol{0.460 \pm 0.005}$ & $\underline{0.464 \pm 0.004}$ & $\textcolor{red}{0.432 \pm 0.011}$ & $0.466 \pm 0.005$ & $0.469 \pm 0.003$ \\
 & GAT & $0.492 \pm 0.005$ & $0.490 \pm 0.004$ & $0.484 \pm 0.004$ & $0.490 \pm 0.004$ & $\underline{0.482 \pm 0.005}$ & $\boldsymbol{0.465 \pm 0.003}$ & $0.490 \pm 0.004$ & $\textcolor{red}{0.429 \pm 0.013}$ & $0.483 \pm 0.004$ & $\underline{0.482 \pm 0.003}$ \\
\midrule
\multicolumn{12}{l}{\textit{Large}} \\
\multirow{2}{*}{ogbn-arxiv} & GCN & $0.987 \pm 0.005$ & $0.987 \pm 0.005$ & $0.980 \pm 0.003$ & $0.984 \pm 0.005$ & $0.974 \pm 0.004$ & $\textcolor{red}{0.962 \pm 0.011}$ & $0.991 \pm 0.005$ & $0.999 \pm 0.008$ & $\boldsymbol{0.971 \pm 0.012}$ & $\underline{0.972 \pm 0.004}$ \\
 & GAT & $1.239 \pm 0.003$ & $1.217 \pm 0.006$ & $\textcolor{red}{1.191 \pm 0.003}$ & $1.216 \pm 0.006$ & $\underline{1.209 \pm 0.007}$ & $\boldsymbol{1.202 \pm 0.006}$ & $1.213 \pm 0.006$ & $1.210 \pm 0.005$ & $1.210 \pm 0.006$ & $1.210 \pm 0.007$ \\
\multirow{2}{*}{Reddit} & GCN & $0.355 \pm 0.003$ & $0.321 \pm 0.004$ & $0.315 \pm 0.004$ & $0.314 \pm 0.007$ & $\boldsymbol{0.301 \pm 0.004}$ & $\textcolor{red}{0.299 \pm 0.004}$ & -- & $0.346 \pm 0.004$ & $0.306 \pm 0.008$ & $\boldsymbol{0.301 \pm 0.004}$ \\
 & GAT & $0.313 \pm 0.007$ & $0.306 \pm 0.007$ & $0.303 \pm 0.006$ & $0.296 \pm 0.006$ & $0.303 \pm 0.007$ & $\boldsymbol{0.270 \pm 0.006}$ & -- & $\textcolor{red}{0.256 \pm 0.004}$ & $\underline{0.273 \pm 0.006}$ & $0.304 \pm 0.008$ \\
\bottomrule
\end{tabular}%
}
\end{table}

\clearpage
\subsection{Calibration Across GNN Backbones}
\label{app:cross_backbone}

Table~\ref{tab:cross_backbone} extends the GCN and GAT evaluations in Appendix~\ref{app:per_backbone} to GCNII, GPR-GNN, GIN, and GraphSAGE. Each backbone is evaluated on the same 17 datasets and 10 seeds under the random train/validation/test split protocol, using the same calibration baselines as the main comparison. Within each dataset, backbone, and seed, all calibrators use the same frozen classifier logits. HoTS uses the observed-label-only predictor described in Appendix~\ref{app:hots_predictor}. Reddit is excluded because some of the compared pipelines exceed the available memory. The table reports the macro-average of the per-dataset mean ECE values.

\begin{table}[ht]
\centering
\small
\setlength{\tabcolsep}{3pt}
\caption{Calibration across additional GNN backbones. Mean ECE (\%) over 17 datasets, with 10 seeds per dataset. Reddit is excluded. Lower is better; best per row in \textbf{bold}.}
\label{tab:cross_backbone}
\resizebox{\textwidth}{!}{%
\begin{tabular}{lccccccccc}
\toprule
Backbone & TS & VS & ETS & HTS & CaGCN & GATS & GETS & WATS & HoTS \\
\midrule
GCNII & 6.82 & 6.08 & 6.29 & 6.23 & \textbf{5.79} & 6.35 & 6.79 & 7.74 & 5.84 \\
GPR-GNN & 6.85 & 5.95 & 6.47 & 5.73 & 6.05 & 5.87 & 7.11 & 7.78 & \textbf{5.49} \\
GIN & 6.32 & 5.67 & 5.37 & 5.23 & 6.37 & 5.31 & 6.33 & 8.98 & \textbf{4.97} \\
GraphSAGE & 6.60 & 6.10 & 5.53 & 5.75 & 6.63 & 5.79 & 6.51 & 9.20 & \textbf{5.11} \\
\bottomrule
\end{tabular}%
}
\end{table}

HoTS achieves the lowest mean ECE on GPR-GNN, GIN, and GraphSAGE, and ranks second on GCNII. These results support its applicability across different GNN architectures.

\clearpage
\subsection{Reliability Diagrams and Node-Level Diagnostics}
\label{app:reliability_diagnostics}

Figure~\ref{fig:reliability} shows a representative reliability diagram on Reddit (GCN backbone). HoTS reduces the visible calibration gap left after standard temperature scaling. Pooled ECE decreases from $6.93\%$ before calibration to $1.34\%$ after calibration, compared with $2.34\%$ for TS. Figure~\ref{fig:reliability_grid_pooled} extends this view across all benchmarks.

\begin{figure}[h]
\centering
\includegraphics[width=0.9\textwidth]{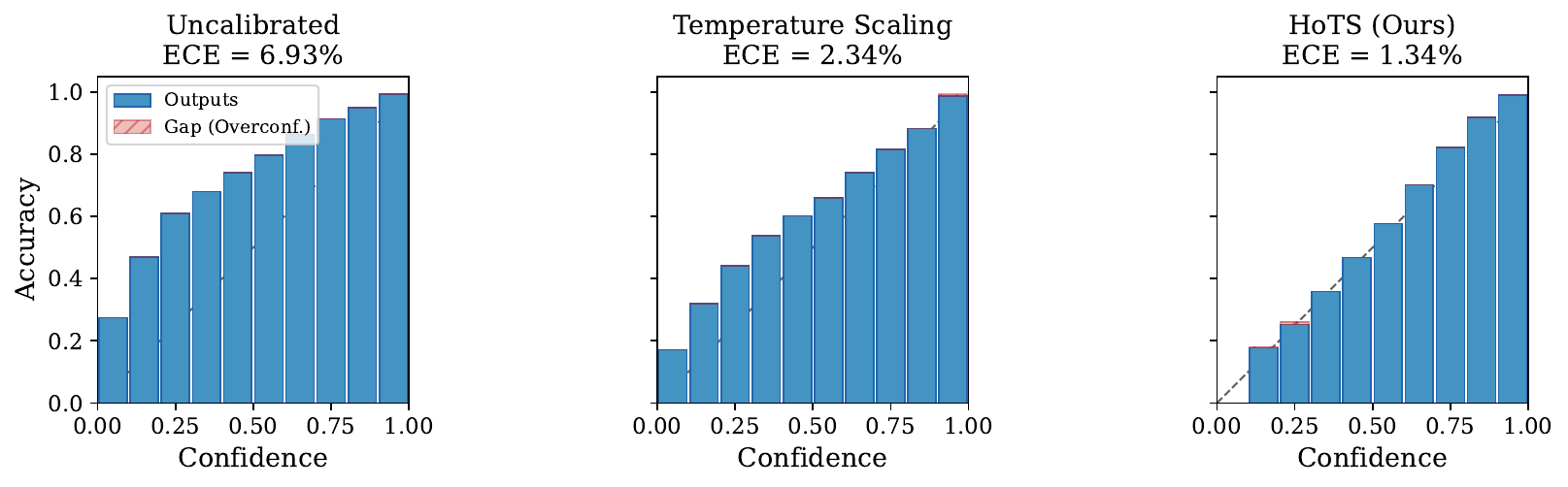}
\caption{Reliability diagrams on Reddit with a GCN backbone, pooling test nodes across 10 seeds into 10 equal-width confidence bins. TS reduces the calibration gap, and HoTS further improves calibration by allowing the temperature to vary across nodes.}
\label{fig:reliability}
\end{figure}

\begin{figure}[!h]
\centering
\includegraphics[width=0.85\textwidth]{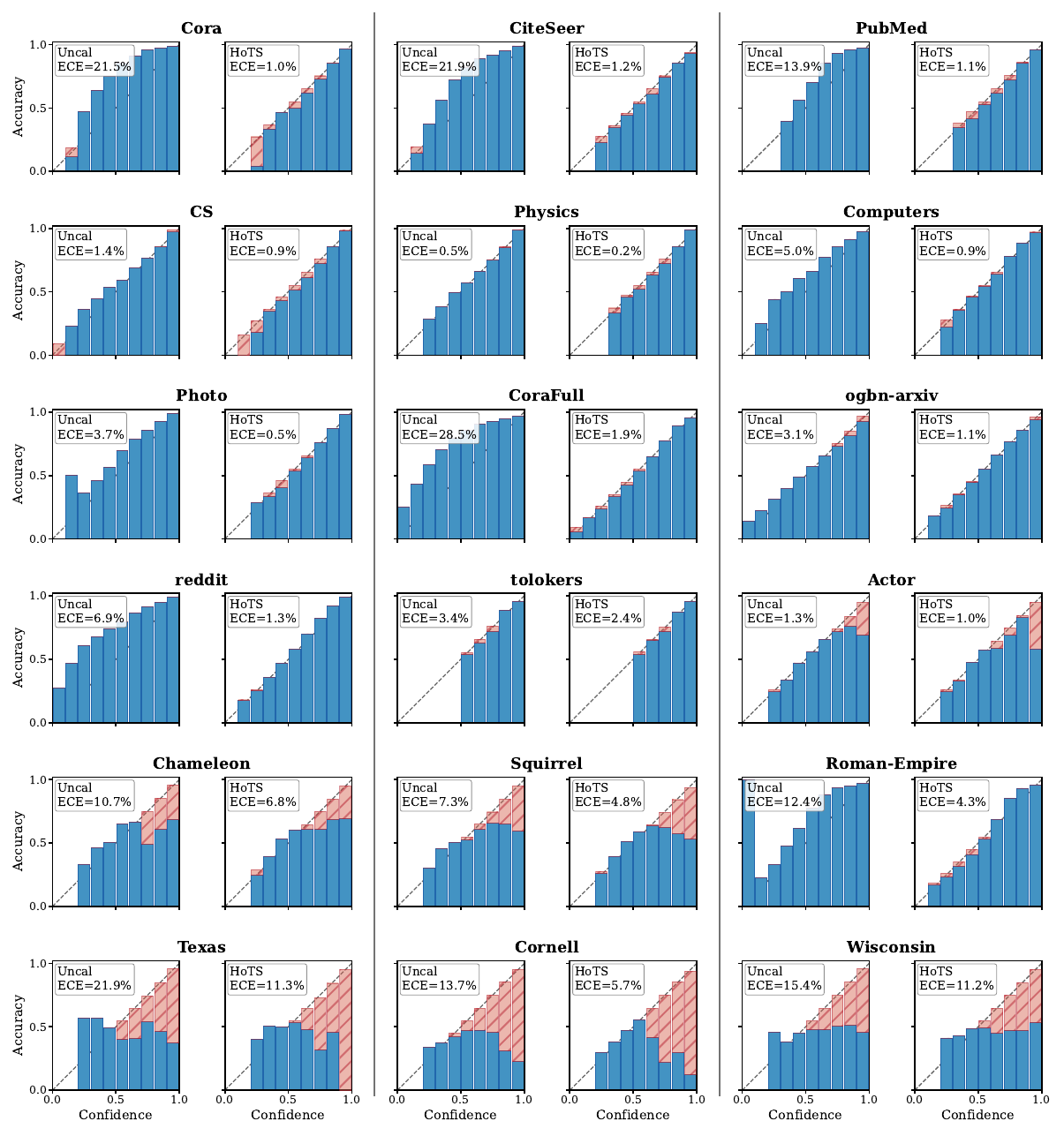}
\caption{Reliability diagrams across the 18 benchmark datasets. Each pair compares the uncalibrated GCN and HoTS, pooling test nodes across 10 seeds into 10 equal-width confidence bins. The figure is intended as a diagnostic complement to Table~\ref{tab:main_ece}; pooled ECE differs from the mean per-seed ECE reported in the tables.}
\label{fig:reliability_grid_pooled}
\end{figure}

\clearpage

Figure~\ref{fig:scatter} visualizes node-level confidence gaps on representative homophilic and heterophilic graphs. The horizontal axis evaluates the homophily-dependent temperature correction using oracle homophily, and the vertical axis is predicted confidence minus the node's correctness indicator. Coloring by oracle homophily and by entropy complement relates these gaps to neighborhood composition and raw logit concentration. Oracle homophily is computed from true labels for this retrospective diagnostic; the implemented HoTS temperature map uses estimated homophily.

\begin{figure}[h]
\centering
\includegraphics[width=\textwidth]{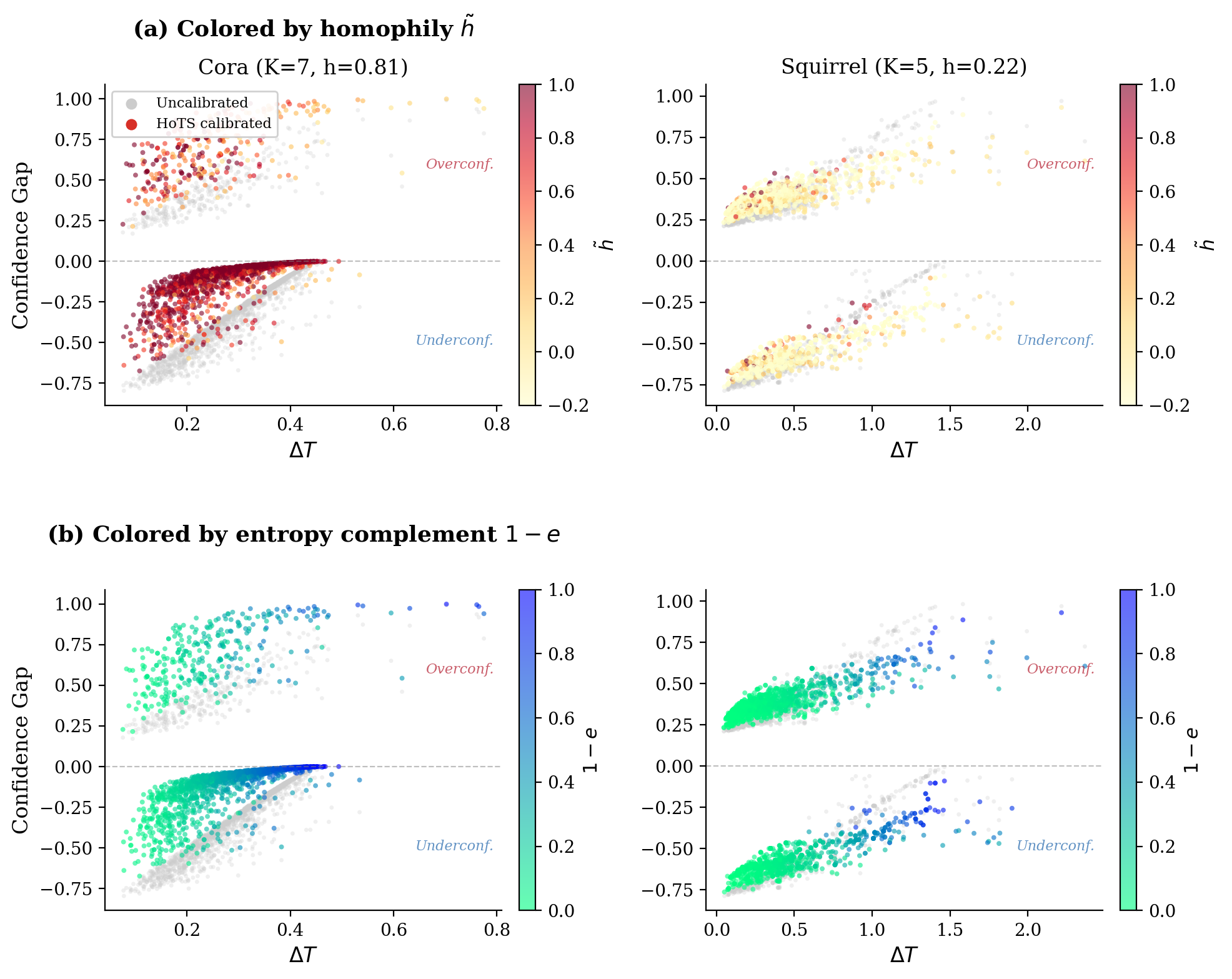}
\caption{Node-level diagnostic for HoTS on representative datasets. Points show confidence gap versus the homophily-dependent temperature correction evaluated with oracle homophily. The two rows color the same nodes by oracle normalized homophily $\tilde h$ and entropy complement $1-e$. Oracle homophily is computed from true labels solely for this diagnostic.}
\label{fig:scatter}
\end{figure}

Figures~\ref{fig:scatter_app_h} and \ref{fig:scatter_app_e} provide the same diagnostic across all datasets.

\begin{figure}[h]
\centering
\includegraphics[width=\textwidth]{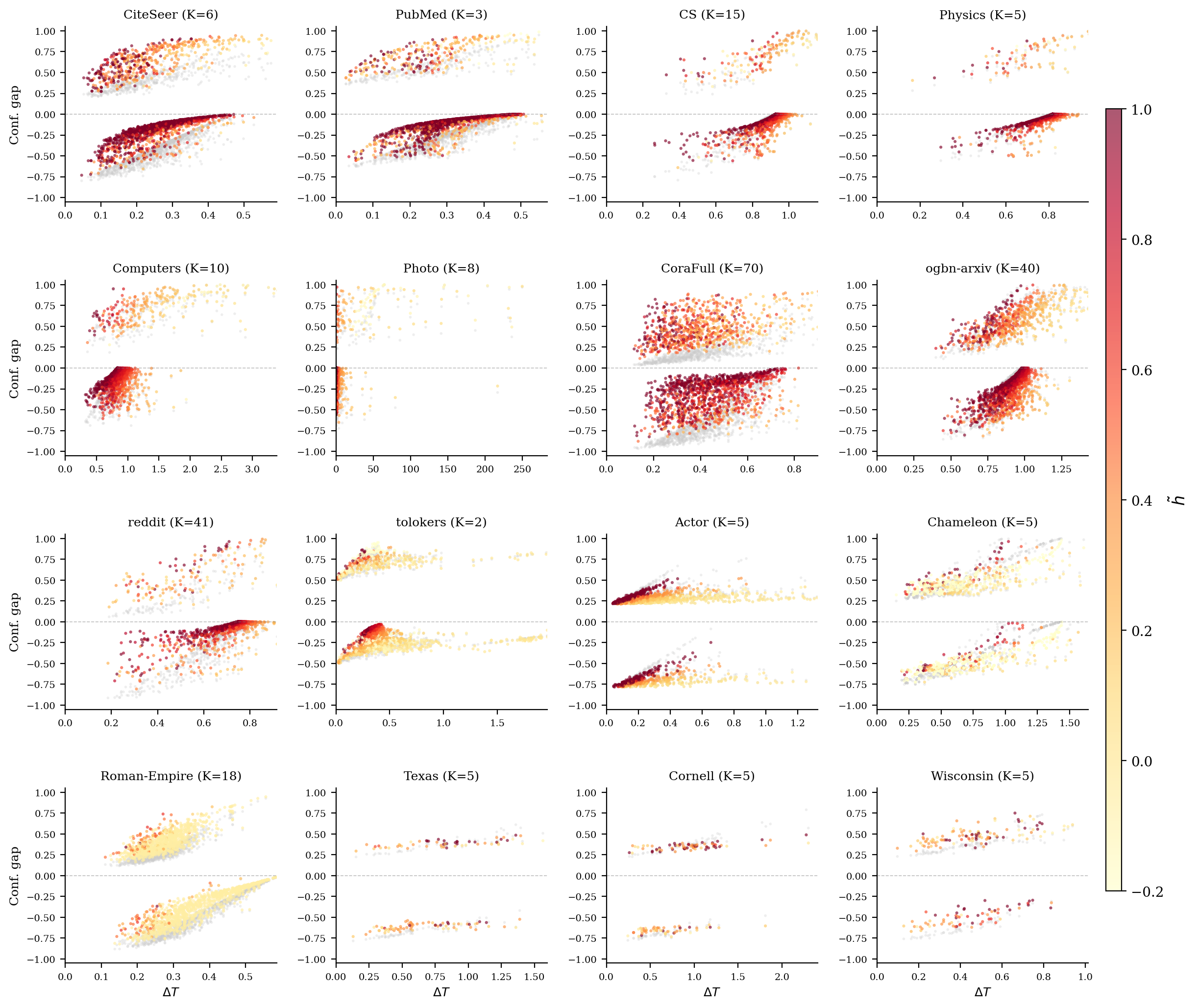}
\caption{Node-level diagnostic across datasets, colored by oracle normalized homophily $\tilde h$, computed from true labels for this diagnostic. The effect of HoTS is most structured on datasets where local homophily varies substantially across nodes.}
\label{fig:scatter_app_h}
\end{figure}

\begin{figure}[h]
\centering
\includegraphics[width=\textwidth]{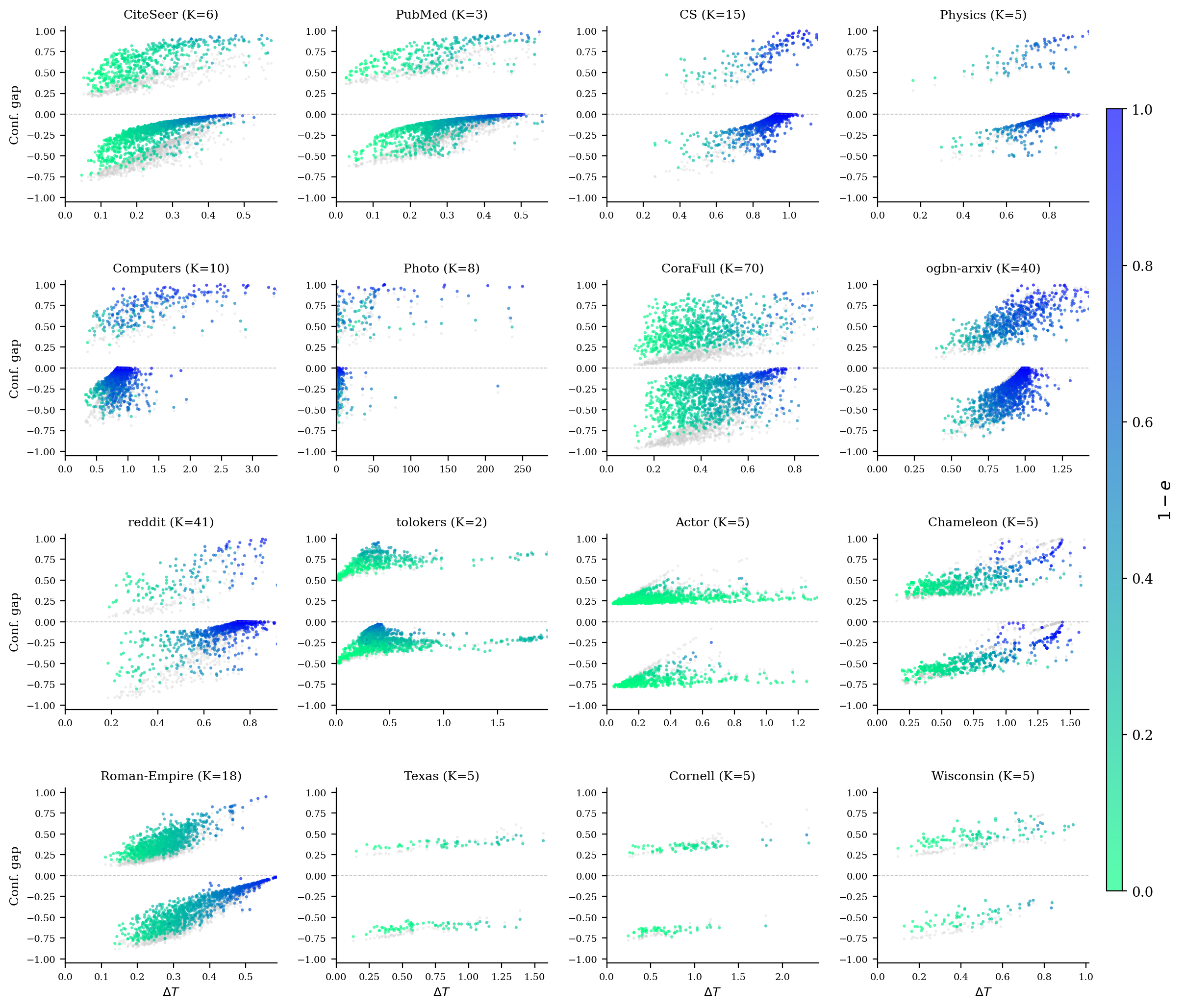}
\caption{Node-level diagnostic across datasets, colored by entropy complement $1-e$. Entropy complement captures logit concentration and is complementary to the homophily signal in Figure~\ref{fig:scatter_app_h}.}
\label{fig:scatter_app_e}
\end{figure}

\clearpage
\subsection{KL Bound Validation}
\label{app:kl_validation}

Appendix~\ref{app:sparse_overlap} derives that, in the sparse-overlap regime, the Gaussian residual KL from the joint neighbor-logit law to the product law scales as $O(K^2\bar d^2/n)$. Figure~\ref{fig:kl_bound} checks this scaling on CSBM graphs using three independent sweeps. The fitted slopes are close to the predicted exponents. The sweep over $K$ has slope $1.81$ overall and $1.97$ when restricted to $K\ge8$, the sweep over $\bar d$ has slope $2.08$, and the sweep over $n$ has slope $-1.01$. A discrimination test at fixed $n_{\mathrm{pc}}$ favors quadratic rather than linear scaling in $K$.

\begin{figure}[h]
\centering
\includegraphics[width=0.5\textwidth]{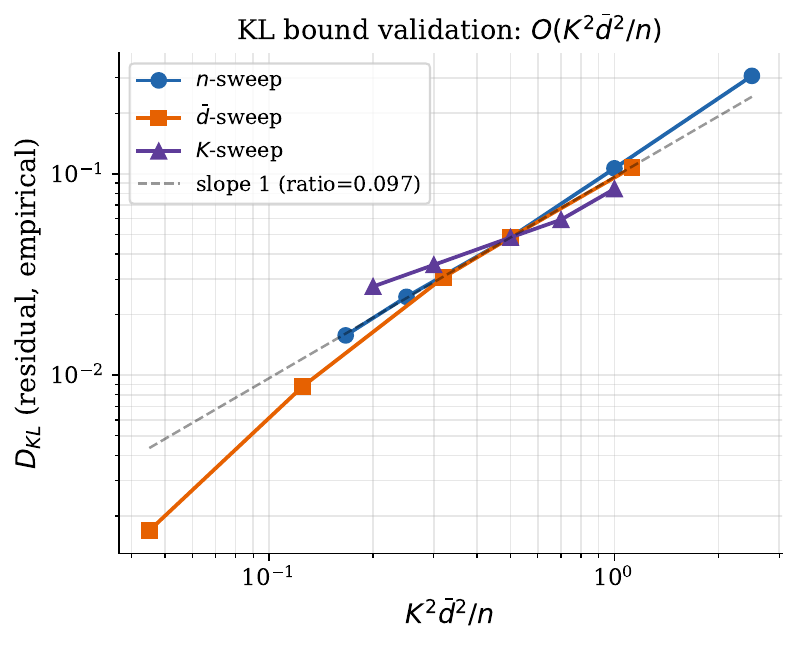}
\caption{KL bound validation on CSBM. Three independent sweeps over $n$, $\bar d$, and $K$ collapse when plotted against $K^2\bar d^2/n$, supporting the residual scaling derived in Appendix~\ref{app:sparse_overlap}.}
\label{fig:kl_bound}
\end{figure}

\subsection{Structural Perturbation Stress Test}
\label{app:shift}

We also evaluate a simple structural stress test. Edges are randomly added or removed at rates from 5\% to 50\%. This auxiliary experiment uses predicted-label neighbor agreement to estimate homophily, rather than the learned GCN estimator used in the main experiments. Logits and this estimate are recomputed on each perturbed graph, and TS and HoTS are refitted on its calibration split. This is not a full cross-domain distribution shift study, but it tests whether the structural feature used by HoTS becomes brittle under moderate graph perturbations. Table~\ref{tab:shift} reports ECE on representative homophilic and heterophilic benchmarks under increasing perturbation rates.

\begin{table}[h]                                                                                      
  \centering\small                                                                                      
  \caption{ECE (\%) under edge perturbation. HoTS is comparatively stable on the homophilic benchmarks
  shown here. On heterophilic graphs, random perturbation can improve calibration for both TS and HoTS  
  by weakening the original heterophilic structure.}                                                    
  \label{tab:shift}                                                                                     
  \begin{tabular}{llcccccc}                                                                             
  \toprule                                                                                              
  Dataset & Method & 0\% & 5\% & 10\% & 20\% & 30\% & 50\% \\                                           
  \midrule                                                                                            
  \multirow{2}{*}{Cora} & HoTS & 2.27 & 2.72 & 2.21 & 2.55 & 2.77 & 3.03 \\                           
   & TS & 3.39 & 3.24 & 2.81 & 3.73 & 3.69 & 4.16 \\                                                    
  \midrule                                    
  \multirow{2}{*}{CiteSeer} & HoTS & 2.83 & 3.09 & 3.21 & 3.10 & 2.72 & 3.92 \\                         
   & TS & 3.09 & 2.95 & 3.09 & 3.11 & 3.13 & 3.79 \\                                                    
  \midrule                                                                                              
  \multirow{2}{*}{CS} & HoTS & 0.99 & 0.95 & 0.95 & 1.26 & 1.19 & 1.15 \\                               
   & TS & 1.49 & 1.49 & 1.54 & 1.93 & 1.89 & 2.44 \\                                                    
  \midrule                                                                                              
  \multirow{2}{*}{Chameleon} & HoTS & 9.58 & 6.50 & 7.28 & 6.33 & 5.90 & 5.36 \\                        
   & TS & 12.61 & 10.00 & 10.22 & 7.45 & 5.90 & 3.29 \\                                                 
  \midrule                                                                                              
  \multirow{2}{*}{Squirrel} & HoTS & 5.09 & 3.27 & 2.46 & 1.68 & 1.25 & 0.66 \\                       
   & TS & 7.85 & 7.30 & 4.71 & 3.02 & 2.10 & 3.93 \\                                                    
  \bottomrule                                                                                           
  \end{tabular}                                                                                         
  \end{table}  

On Cora and CS, HoTS changes by less than one percentage point at 50\% perturbation, whereas TS degrades more substantially. On Chameleon and Squirrel, calibration improves for both methods as random perturbations partially wash out the original heterophilic structure. This mixed behavior is consistent with the role of $\alpha$. When estimated homophily is informative, HoTS can exploit it. When the structural signal is weakened or altered, the learned exponent and the base temperature reduce the reliance on that signal.

\subsection{Confidence-Conditioned Homophily Diagnostics}
\label{app:homophily_diagnostics}

We use oracle local homophily to examine whether neighborhood label composition is associated with prediction correctness within base-model confidence bins. This choice isolates the structural quantity motivating Proposition~\ref{prop:necessity} from errors introduced by the learned homophily estimator. Oracle homophily $h_i$ is the fraction of same-label neighbors on the backbone input graph, including self-loops. We pool GCN test predictions over ten seeds per dataset, stratify nodes into five confidence-quantile bins, and compare empirical accuracy across oracle-homophily groups within each bin. Figure~\ref{fig:p1_homophily_split} uses a 50/50 split, while Figure~\ref{fig:p1_homophily_split_3bin} uses three groups. True labels are used solely for this retrospective grouping; these oracle values are not used to fit the homophily estimator or the temperature map. The implemented HoTS uses the observed-label-only estimate $\hat h_i$.

\begin{figure}[h]
\centering
\includegraphics[width=\textwidth]{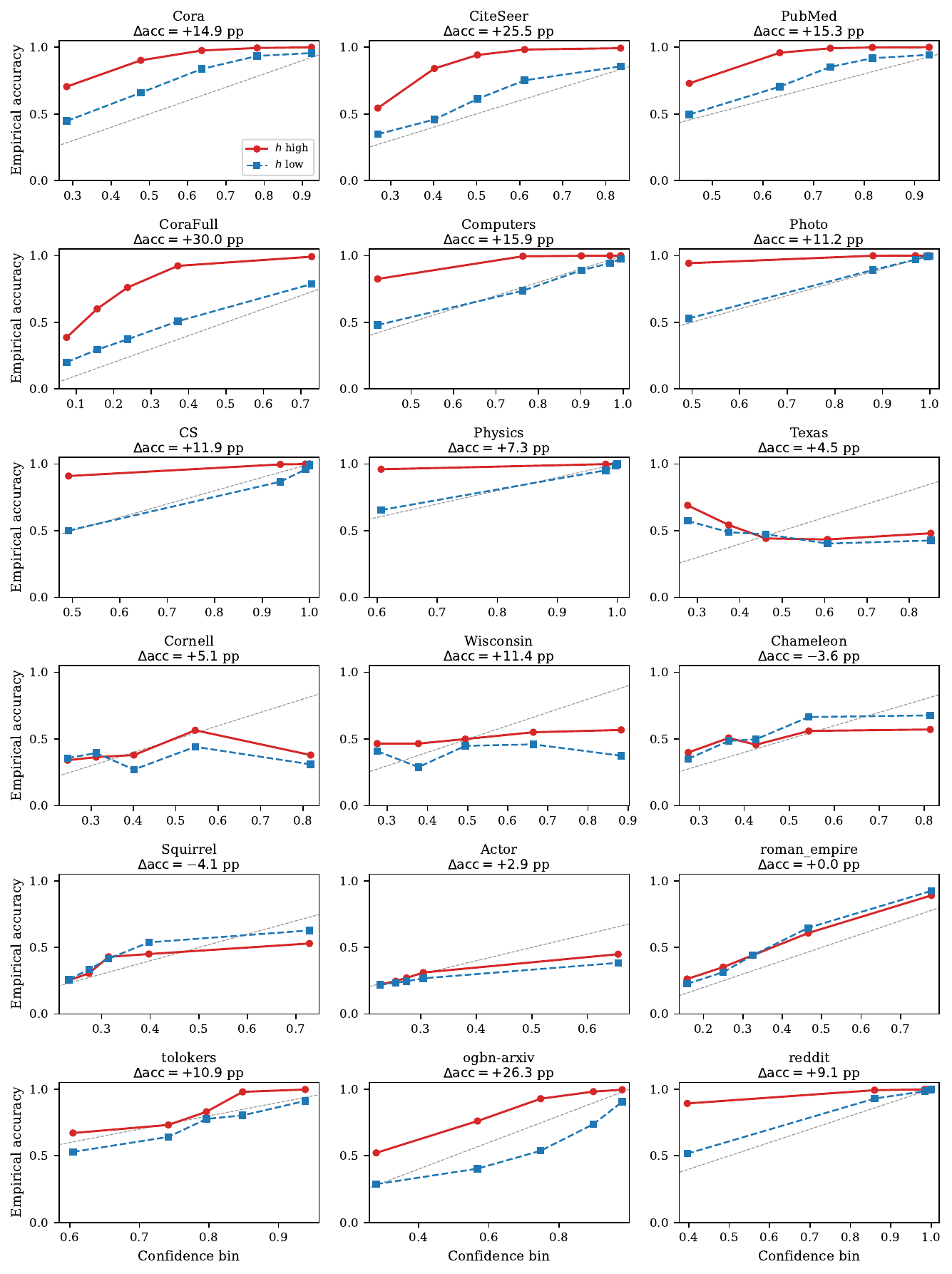}
\caption{Confidence-conditioned oracle-homophily diagnostic with a 50/50 split. For each dataset, test nodes are stratified by the base-model confidence and separated into low- and high-$h$ groups using a rank-based 50/50 split of oracle local homophily within each bin; ties retain the pooled node order. Oracle homophily is used to examine the structural reliability signal separately from estimation error. Curves show empirical accuracy within each confidence bin. The title of each panel reports the mean high-minus-low empirical accuracy gap $\Delta_{\mathrm{acc}}$ across the five confidence bins in percentage points.}
\label{fig:p1_homophily_split}
\end{figure}

\begin{figure}[h]
\centering
\includegraphics[width=\textwidth]{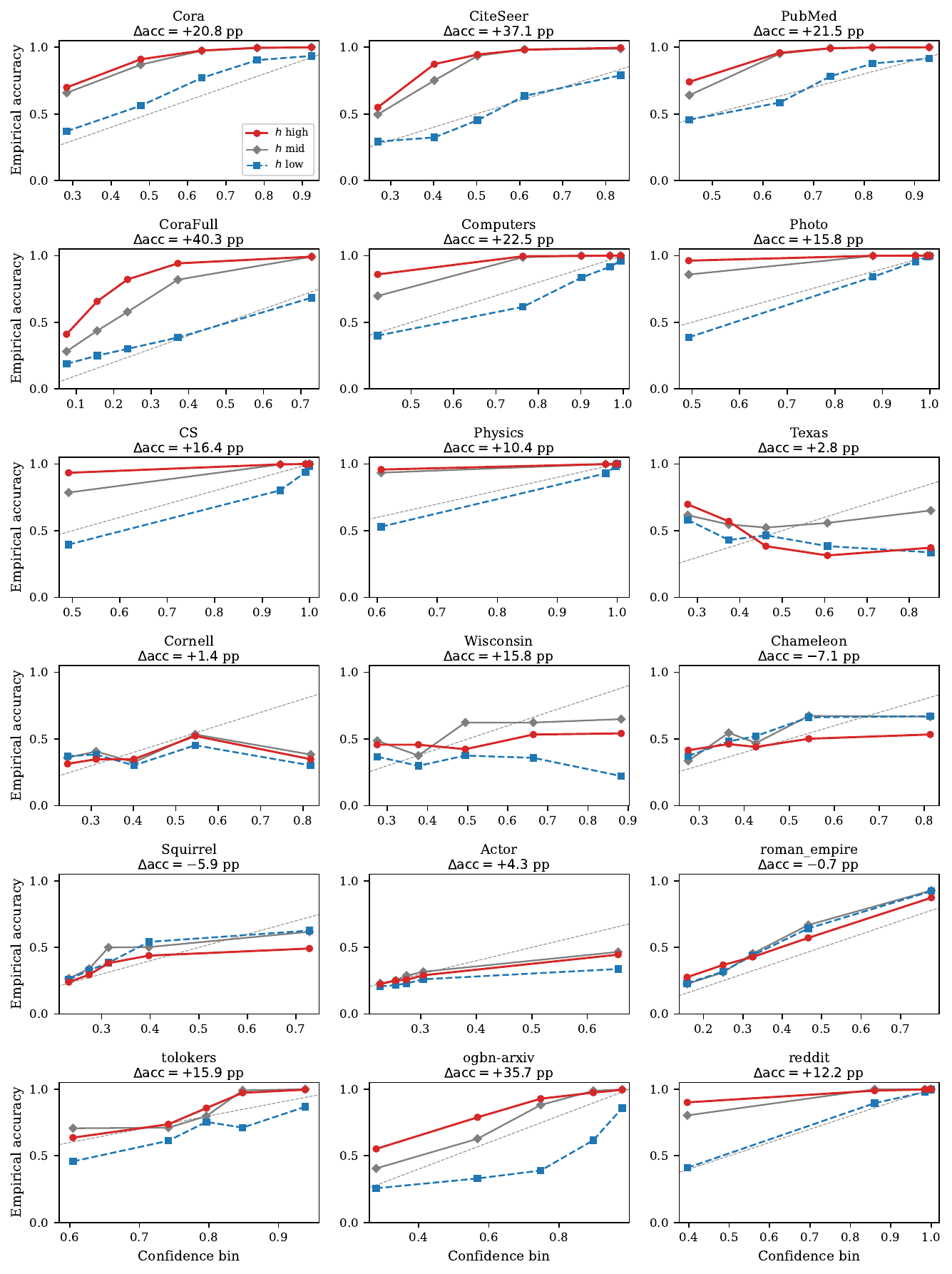}
\caption{Confidence-conditioned oracle-homophily diagnostic with three groups. This figure repeats the diagnostic in Figure~\ref{fig:p1_homophily_split}, but separates nodes by oracle-homophily rank into low-, middle-, and high-$h$ groups of approximately equal size within each confidence bin. The title of each panel reports the mean high-minus-low empirical accuracy gap $\Delta_{\mathrm{acc}}$ across the five confidence bins in percentage points. The three-way split visualizes how accuracy varies with neighborhood label composition at comparable confidence.}
\label{fig:p1_homophily_split_3bin}
\end{figure}

\subsection{Comparison with Other Structural Statistics}
\label{app:structural_comparison}

To examine the empirical basis for focusing on homophily, we compare its residual association with prediction correctness against representative motif, centrality, and curvature statistics. We use oracle normalized local homophily, the local clustering coefficient, PageRank, and mean incident augmented Forman--Ricci curvature. Oracle homophily is computed from true labels solely for this retrospective diagnostic; it is not an input to the HoTS predictor or temperature map.

We evaluate GCN predictions over five seeds. For each dataset and seed, we separately regress the correctness indicator and each structural statistic on an intercept, uncalibrated confidence, and $\log(1+d_i)$, where $d_i$ is the in-degree in the backbone input graph. Their residual correlation measures the additional linear association with correctness after controlling for confidence and degree. The pooled summary concatenates residuals across datasets and seeds; the macro summary averages dataset-level correlations, each computed after pooling that dataset's residuals across seeds.

Clustering, PageRank, and curvature are computed on the undirected, self-loop-free graph. PageRank uses damping $0.85$. For an edge $(u,v)$, augmented Forman--Ricci curvature is $4-d_u-d_v+3t_{uv}$, where degrees and the common-neighbor count $t_{uv}$ refer to this graph; node curvature averages over incident edges. Oracle homophily is the same-label outgoing-neighbor fraction on the backbone input graph, including self-loops, normalized as $\tilde h_i=(Kh_i-1)/(K-1)$. Homophily and PageRank cover all benchmarks; clustering and curvature omit Reddit because of the cost of triangle computations.

\begin{table}[ht]
\centering
\small
\caption{Partial correlation with prediction correctness after controlling for confidence and log-degree. Results use GCN and five seeds. Homophily and PageRank cover 18 datasets; clustering and curvature cover 17, excluding Reddit.}
\label{tab:structural_comparison}
\begin{tabular}{llrr}
\toprule
Structural statistic & Category & Pooled & Macro \\
\midrule
Oracle normalized homophily & Homophily & $+0.384$ & $+0.283$ \\
Local clustering coefficient & Motifs & $+0.044$ & $+0.012$ \\
PageRank & Centrality & $+0.003$ & $-0.003$ \\
Mean incident augmented Forman--Ricci curvature & Curvature & $-0.026$ & $-0.052$ \\
\bottomrule
\end{tabular}
\end{table}

Among the statistics examined, oracle homophily shows the strongest residual association with correctness in both summaries (Table~\ref{tab:structural_comparison}). Together with the explicit homophily-dependent temperature law from the CSBM analysis, this provides an empirical basis for our choice of structural variable. The comparison concerns the structural properties themselves and their conditional linear association with correctness.

\clearpage
\subsection{HoTS Component Ablation}
\label{app:component_ablation}

\paragraph{Component ablation.} Table~\ref{tab:ablation_summary} compares TS, entropy-only scaling, homophily-only scaling, HoTS with a fixed exponent, and the full HoTS. TS uses a global temperature. Entropy-only uses $T_i=T_{\mathrm{base}}+\beta\sqrt{2K\log K(1-e_i)}$, omitting the homophily denominator and fitting its own positive scalar parameters. Homophily-only uses $T_i=T_{\mathrm{base}}+\beta/(|\hat{\htilde}_i|+\varepsilon)^\alpha$, omitting the entropy numerator. HoTS combines both terms as defined in Section~\ref{sec:method}, while HoTS ($\alpha=1$) retains this combination but fixes the exponent. HoTS achieves the lowest reported mean ECE.

\begin{table}[!h]
\centering
\setlength{\tabcolsep}{4pt}
\caption{HoTS component ablation. Mean ECE (\%) across 18 datasets, pooling GCN and GAT with 10 seeds per backbone. HoTS and HoTS ($\alpha=1$) use the same observed-label-only homophily estimates. $^\dagger$Homophily-only retains the previously reported result and has not been rerun with the current observed-label-only estimator. Lower is better; best in \textbf{bold}.}
\label{tab:ablation_summary}
\resizebox{\textwidth}{!}{%
\begin{tabular}{lccccc}
\toprule
 & TS & Entropy-only & Homophily-only$^\dagger$ & HoTS ($\alpha=1$) & HoTS (ours) \\
\midrule
Mean ECE (\%) & $6.04$ & $4.83$ & $5.56$ & $5.90$ & $\boldsymbol{4.79}$ \\
\bottomrule
\end{tabular}%
}
\end{table}

\paragraph{Learned versus fixed exponent.} The fixed-$\alpha=1$ variant refits $T_{\mathrm{base}}$ and $\beta$ using the same backbone checkpoints, homophily estimates, splits, and optimization settings as HoTS. Its higher mean ECE supports learning the homophily exponent rather than using the oracle asymptotic value on noisy real graphs.

\clearpage
\section{Implementation Details}                                                                 
\label{app:implementation}                                                                     

This appendix collects the experimental and implementation details needed to reproduce our       
results, including dataset statistics and preprocessing, hardware and software environment,
training hyperparameters for the GNN backbones and the calibration methods, the architecture of  
the HoTS homophily predictor, and a measurement of prediction preservation across calibrators. 

\subsection{Dataset Statistics and Preprocessing}
\label{app:datasets}
                                                                                               
Table~\ref{tab:datasets} reports the size, feature dimension, number of classes, and node-level  
homophily for the 18 benchmark datasets used in our experiments. Homophily is computed as the    
mean ($\pm$ std) of node-level homophily $h_i = d_i^{-1}\sum_{j\in\cN(i)} \mathbf{1}\{y_j =      
y_i\}$ across all nodes (with $h_i = 0$ for isolated nodes).                                   

\begin{table}[h]                                                                                 
\centering
\caption{Dataset statistics. Homophily is the mean ($\pm$ std) of node-level homophily $h_i$ over
all nodes. All datasets use random 20/10/70 train/val/test splits regenerated for each seed.}   
\label{tab:datasets}
\begin{tabular}{lrrrrc}                                                                          
\toprule                                                                                       
Dataset & Nodes & Edges & Classes & Features & Homophily \\                                      
\midrule
\multicolumn{6}{l}{\textit{Citation (Planetoid)}} \\                                             
Cora & 2{,}708 & 13{,}264 & 7 & 1{,}433 & 0.871 $\pm$ 0.211 \\                                   
CiteSeer & 3{,}327 & 12{,}431 & 6 & 3{,}703 & 0.820 $\pm$ 0.245 \\                               
PubMed & 19{,}717 & 108{,}365 & 3 & 500 & 0.865 $\pm$ 0.217 \\                                   
\midrule                                                                                         
\multicolumn{6}{l}{\textit{Amazon / Coauthor}} \\                                                
Computers & 13{,}381 & 491{,}556 & 10 & 767 & 0.802 $\pm$ 0.239 \\                               
Photo & 7{,}487 & 238{,}087 & 8 & 745 & 0.849 $\pm$ 0.227 \\                                     
CS & 18{,}333 & 163{,}788 & 15 & 6{,}805 & 0.832 $\pm$ 0.239 \\                                  
Physics & 34{,}493 & 495{,}924 & 5 & 8{,}415 & 0.915 $\pm$ 0.174 \\                              
CoraFull & 18{,}800 & 144{,}170 & 70 & 8{,}710 & 0.677 $\pm$ 0.274 \\                            
\midrule                                                                                         
\multicolumn{6}{l}{\textit{WebKB}} \\                                                            
Texas & 183 & 492 & 5 & 1{,}703 & 0.530 $\pm$ 0.268 \\                                           
Cornell & 183 & 478 & 5 & 1{,}703 & 0.560 $\pm$ 0.273 \\                                         
Wisconsin & 251 & 750 & 5 & 1{,}703 & 0.565 $\pm$ 0.290 \\                                       
\midrule                                                                                         
\multicolumn{6}{l}{\textit{Wikipedia}} \\                                                        
Chameleon & 2{,}277 & 36{,}101 & 5 & 2{,}325 & 0.104 $\pm$ 0.229 \\                              
Squirrel & 5{,}201 & 217{,}073 & 5 & 2{,}089 & 0.089 $\pm$ 0.194 \\                              
Actor & 7{,}600 & 40{,}869 & 5 & 932 & 0.608 $\pm$ 0.319 \\                                      
\midrule                                                                                         
\multicolumn{6}{l}{\textit{Heterophilous Benchmark}} \\                                          
Roman-Empire & 22{,}662 & 65{,}854 & 18 & 300 & 0.046 $\pm$ 0.139 \\                             
tolokers & 11{,}758 & 1{,}038{,}000 & 2 & 10 & 0.634 $\pm$ 0.247 \\                              
\midrule                                                                                         
\multicolumn{6}{l}{\textit{Large-scale}} \\                                                      
ogbn-arxiv & 169{,}343 & 2{,}501{,}829 & 40 & 128 & 0.707 $\pm$ 0.272 \\                         
Reddit & 232{,}965 & 114{,}848{,}857 & 41 & 602 & 0.813 $\pm$ 0.230 \\                           
\bottomrule                                                                                      
\end{tabular}                                                                                    
\end{table}                                                                                      
                                                                                             
\paragraph{Preprocessing.} We apply three preprocessing steps before training:                   
\textbf{(i)} \textbf{Largest connected component (LCC).} For Amazon (Computers, Photo) and
Coauthor / Coauthor-Full (CS, Physics, CoraFull) we extract the largest connected component,     
following \citet{zhuang2025gets}, so that all nodes are reachable during message passing.      
\textbf{(ii)} \textbf{Undirected edges.} Directed graphs (in particular ogbn-arxiv) are converted
to undirected by adding reverse edges.                                                          
\textbf{(iii)} \textbf{Self-loops.} A self-loop is added at every node, which is standard for
GCN-style aggregation \citep{kipf2017semi} and ensures that a node's own features participate in 
the layer-wise update.                                                                         
                                                                                               
\paragraph{Splits and protocol uniformity.} For each (dataset, seed) we sample a fresh random 20/10/70
train/validation/test split. The validation split is used for both early stopping of the GNN
backbone and for fitting all post-hoc calibrators. The test split is used only for the reported
metrics. We use random splits across all 18 datasets to keep the calibration evaluation protocol
consistent across heterogeneous benchmarks; this also matches the protocol assumed in the public      
reference implementations of the graph-aware baselines (CaGCN, GATS, GETS, WATS). GATS is omitted on Reddit because it exceeds the 24~GB GPU memory budget under this protocol (Section~\ref{exp:setup}).

\subsection{Hardware and Software}                                                               
\label{app:hardware}                                                                             
                                                                                               
All experiments are run on a workstation with 4$\times$ NVIDIA RTX A5000 GPUs (24~GB each); each 
individual run uses a single GPU. The software stack is Python~3.10, PyTorch~2.1.0 with          
CUDA~12.1, DGL~2.4.0, PyTorch Geometric~2.7.0, and OGB~1.3.6. Random seeds for Python, NumPy,    
PyTorch, and DGL are fixed before each (dataset, backbone, seed) run, and we set
\texttt{CUBLAS\_WORKSPACE\_CONFIG=:4096:8} together with                                       
\texttt{torch.use\_deterministic\_algorithms} for reproducible CUDA kernels.

\paragraph{Memory limits.} GATS results on Reddit are reported as ``--'' because its attention-based per-node temperature module exceeds the $24$~GB GPU memory budget on this graph (Tables~\ref{tab:main_ece} and~\ref{tab:app_ece}).

\subsection{GNN Training Hyperparameters}                                                        
\label{app:gnn_hyperparams}                               
                                                                                               
The main experiments use two backbones: a 2-layer GCN \citep{kipf2017semi} and a 2-layer GAT            
\citep{velivckovic2017graph}. Both backbones are trained from scratch for each (dataset, seed)   
pair using the cross-entropy loss on the training split, with early stopping on the validation   
loss. The GAT uses 2 attention heads with attention and feature dropout both equal to the GNN  
dropout. Hyperparameters follow the per-dataset settings adopted by prior calibration          
work~\citep{zhuang2025gets} and group into three regimes summarized in
Table~\ref{tab:gnn_hyperparams}.

\paragraph{Dataset groups.} (G1)~Cora, CiteSeer, PubMed, Texas, Cornell, Wisconsin, Chameleon,   
Squirrel, Actor, Roman-Empire, tolokers, and Reddit. (G2)~Computers, Photo, CS, Physics,
CoraFull. (G3)~ogbn-arxiv.                                                                       
                                                        
\begin{table}[h]                                                                                 
\centering\small
\caption{GNN training hyperparameters by dataset group. Optimizer is Adam in all cases; both     
backbones train for at most 200 epochs; ``--'' under Patience indicates no early stopping (full  
200 epochs are run). The same configuration is used for GCN and GAT, with GAT additionally using
2 attention heads and feature/attention dropout equal to the listed Dropout. Group~G3            
(ogbn-arxiv) additionally uses BatchNorm between layers, following the OGB convention.}
\label{tab:gnn_hyperparams}                                                                    
\begin{tabular}{lcccccc}
\toprule                                                                                         
Group & Hidden & Layers & LR & Weight decay & Dropout & Patience \\
\midrule                                                                                         
G1 & 16  & 2 & $10^{-2}$ & $5\times 10^{-4}$ & 0.5 & 50 / -- \\
G2 & 64  & 2 & $10^{-2}$ & $10^{-3}$         & 0.5 & 50 \\                                       
G3 & 256 & 2 & $10^{-2}$ & $0$               & 0.5 & -- \\                                       
\bottomrule                                                                                      
\end{tabular}                                                                                    
\end{table}                                           
                                                                                               
\paragraph{Common settings.} All runs use the Adam optimizer, train for at most 200 epochs, and  
use the validation split both for early stopping and for fitting all post-hoc calibrators.       
ogbn-arxiv additionally uses BatchNorm between GCN/GAT layers, following the OGB leaderboard     
convention. PubMed, ogbn-arxiv, and Reddit run all 200 epochs without early-stopping patience; 
the remaining datasets stop when the validation loss has not improved for 50 epochs.

\subsection{Calibration Method Hyperparameters}                                                
\label{app:cal_hyperparams}                                                                      
                                                        
With the GNN backbone parameters frozen, the iterative calibration stage optimizes parameters
on the validation (calibration) split. We use the Adam optimizer with learning rate $10^{-2}$ and no weight decay,
train for at most $1{,}000$ epochs, and use training-split cross-entropy for checkpoint selection
and early stopping with a patience of $50$ epochs. Best weights from the early-stopping checkpoint are restored before
evaluation. The separate HoTS homophily predictor follows the supervision and training protocol in Appendix~\ref{app:hots_predictor}.
These common iterative calibration settings apply to all methods listed below; method-specific
architectural details are summarized in Table~\ref{tab:cal_hyperparams}.

\begin{table}[h]
\centering\small
\caption{Method-specific calibration settings. Common settings (Adam, $\text{lr}{=}10^{-2}$,
weight decay $0$, $1{,}000$ epochs, patience $50$) apply to all methods. ``Per-dataset config''  
indicates that learning rate, hidden width, attention heads, or other architectural choices
follow the per-dataset configuration files released with the original method.}                   
\label{tab:cal_hyperparams}                               
\begin{tabular}{lp{8.5cm}}                                                                     
\toprule                                                                                         
Method & Method-specific settings \\
\midrule                                                                                         
TS, ETS, HTS & Scalar parameters only; no architectural hyperparameters. \\
VS  & Per-class affine logit transform $z_c \mapsto w_c\,z_c + b_c$. \\                        
CaGCN, GATS & Per-dataset config (auxiliary GCN/attention width and dropout) following the       
original releases. \\                                                                            
GETS & Per-dataset config (mixture-of-experts width and number of experts) following the original
release. \\                                                                                     
WATS & Per-dataset config (graph wavelet basis size) following the original release. \\        
HoTS (ours) & Three scalar temperature parameters $(T_{\mathrm{base}}, \beta, \alpha)$, enforced through     
softplus parameterizations $T_{\mathrm{base}}=\operatorname{softplus}(\hat T)+0.1$,              
$\beta=\operatorname{softplus}(\hat\beta)+0.01$, $\alpha=\operatorname{softplus}(\hat\alpha)+0.01$
on unconstrained underlying parameters $\hat T, \hat\beta, \hat\alpha\in\mathbb{R}$. We         
initialize $\hat T=\hat\beta=0.5$ and $\hat\alpha=0$, giving effective initial values
$T_{\mathrm{base}}\approx 1.07$, $\beta\approx 0.98$, $\alpha\approx 0.70$. The stabilizer in the
homophily denominator is fixed to $\varepsilon=0.02$. The homophily predictor that produces $\hat h_i$ is trained separately; see Section~\ref{app:hots_predictor}. \\
\bottomrule
\end{tabular}
\end{table}

\subsection{HoTS Homophily Predictor}                               
\label{app:hots_predictor}                                                                       
                                                                                               
The HoTS denominator depends on the normalized estimate $\hat{\tilde h}_i=(K\hat h_i-1)/(K-1)$. We train an auxiliary GCN on homophily targets constructed solely from observed training and validation labels. The predictor is fitted once within each (dataset, backbone, seed) calibration run, and its predictions are cached before fitting the temperature map. The base classifier and its logits remain fixed throughout calibration.

\paragraph{Architecture.} The predictor is a 2-layer GCN of the form                             
\begin{equation*}
  \hat h_i = \sigma\!\left(\mathrm{GCN}_2\!\left(\mathrm{ReLU}\!\left(\mathrm{GCN}_1(x_i;      
A)\right);\, A\right)\right),                                                                    
\end{equation*}                                                                                
mapping from input feature dimension $F$ to a hidden width of $32$ and then to a scalar output   
passed through a sigmoid so that $\hat h_i \in (0,1)$. The same architecture is used for all 18  
datasets.                                                                                        
                                                                                               
\paragraph{Supervision target.} Let $\mathcal I_{\mathrm{known}}=\mathcal I_{\mathrm{train}}\cup\mathcal I_{\mathrm{val}}$ be the nodes with observed labels. Define the observed-label neighborhood and the supervised node set as
\begin{align*}
\mathcal N_L(i) &= \{j\in\mathcal N(i): j\in\mathcal I_{\mathrm{known}},\ j\ne i\}, \\
\mathcal M &= \{i\in\mathcal I_{\mathrm{known}}: |\mathcal N_L(i)|>0\}.
\end{align*}
For $i\in\mathcal M$, the masked supervision target is
\begin{equation*}
h_i^{\mathrm{mask}}=\frac{1}{|\mathcal N_L(i)|}
\sum_{j\in\mathcal N_L(i)}\mathbf 1\{y_j=y_i\}.
\end{equation*}
The target uses only observed-label neighbors, excluding self-loops. Known nodes without such neighbors are excluded from the predictor loss.

\paragraph{Predictor training.} The auxiliary GCN minimizes
\begin{equation*}
\mathcal L_{\mathrm{pred}}(\theta)=\frac{1}{|\mathcal M|}
\sum_{i\in\mathcal M}\bigl(\hat h_i(\theta)-h_i^{\mathrm{mask}}\bigr)^2
\end{equation*}
using Adam with learning rate $10^{-2}$ and weight decay $10^{-4}$ for at most $200$ epochs. Early stopping monitors this supervised training loss with a patience of $30$ epochs, and the best-loss checkpoint is restored. The GCN uses the full transductive graph with self-loops and node features to predict homophily for all nodes, including those without observed labeled neighbors.

\paragraph{Temperature fitting and label access.} After predictor training, we compute and cache $\hat h_i$ for every node and normalize it to $\hat{\tilde h}_i=(K\hat h_i-1)/(K-1)$. These estimates are held fixed while fitting $T_{\mathrm{base}},\beta,\alpha$ on the frozen classifier logits. The temperature map minimizes validation-split cross-entropy; training-split cross-entropy selects the checkpoint and controls early stopping. Test labels are used only for evaluation and retrospective diagnostics, not for fitting or checkpoint selection.

\subsection{Prediction Preservation Across Calibrators}
\label{app:vsgets}

A node-wise calibrator that preserves the predicted class is desirable: any improvement in
selective classification or downstream decisions should come from better confidence
\emph{ranking}, not from changing the underlying classifier. To quantify this, we measure, for
each calibrator and for each of the $360 = 18{\times}2{\times}10$ (dataset, backbone, seed)
experiments, whether at least one test node's argmax differs from the uncalibrated baseline.
Table~\ref{tab:prediction-change} reports the resulting rate. Methods that apply a positive
scalar temperature to logits ($T_i > 0$ for every node) are guaranteed to preserve $\arg\max$;
methods that learn class-specific or per-class affine transforms (VS, GETS) can reorder logit
components and change the predicted class.

\begin{table}[h]
\centering\small
\caption{Prediction change rates across all $360$ experiments ($18$ datasets $\times$ $2$
backbones $\times$ $10$ seeds). ``Preserves'' indicates whether the method theoretically
preserves the predicted class. ``Rate'' is the fraction of experiments in which at least one test
node's $\arg\max$ differs from the uncalibrated baseline. Lower is more conservative. $^*$ETS
can in principle reweight class probabilities but empirically learns mixture weights very close
to $(1, 0, 0)$. $^\dagger$CaGCN and GATS produce strictly positive $T_i$, but rare numerical edge
cases (near-tied logits) flip a small number of predictions. $^\ddagger$Reddit GATS is omitted
due to OOM during fitting (Section~\ref{app:hardware}); the rate for GATS is computed over the
remaining $340$ experiments.}
\label{tab:prediction-change}
\begin{tabular}{lp{6.5cm}cc}
\toprule
Method & Calibrator form & Preserves & Rate \\
\midrule
\multicolumn{4}{l}{\textit{Global methods}} \\
TS    & $\mathrm{softmax}(z/T)$, $T>0$ scalar & Yes & $0.0\%$ \\
ETS   & $\sum_k w_k\, p^{(k)}$, mixture & No$^{*}$ & $0.3\%$ \\
VS    & $z^{\mathrm{cal}}_c = w_c z_c + b_c$ per-class affine & No & $96.1\%$ \\
\midrule
\multicolumn{4}{l}{\textit{Node-wise methods}} \\
HTS   & $\mathrm{softmax}(z_i/T_i)$, $T_i = a + b\,e_i$, $T_i>0$ & Yes & $0.0\%$ \\
CaGCN & $\mathrm{softmax}(z_i/T_i)$, $T_i = \mathrm{GCN}(x, A)_i$ & Yes$^{\dagger}$ & $1.9\%$ \\
GATS  & $\mathrm{softmax}(z_i/T_i)$, $T_i = \mathrm{Attn}(x, A)_i$ & Yes$^{\dagger}$ &
$0.9\%^{\ddagger}$ \\
GETS  & $z^{\mathrm{cal}}_{i,c} = z_{i,c}\cdot \mathrm{softplus}(T_{i,c})$ & No & $98.6\%$ \\
WATS  & $\mathrm{softmax}(z_i/T_i)$ with wavelet-based $T_i>0$ & Yes & $0.0\%$ \\
HoTS  & $\mathrm{softmax}(z_i/T_i)$, $T_i = T_{\mathrm{base}} + \dfrac{\beta\sqrt{2K\log
K(1-e_i)}}{(|\hat{\tilde h}_i|+\varepsilon)^{\alpha}} > 0$ & Yes & $\mathbf{0.0\%}$ \\
\bottomrule
\end{tabular}
\end{table}

HoTS, like TS, HTS, and WATS, never changes the predicted class because $T_i > 0$ is enforced by
the softplus parameterization (Section~\ref{app:cal_hyperparams}). This means the gains in
Table~\ref{tab:main_ece}, Table~\ref{tab:nll_degece}, and Table~\ref{tab:selective} reflect
improved probability ranking rather than reclassification. In contrast, VS and GETS modify the
predicted class on roughly $96\%$ and $99\%$ of experiments respectively, so any
selective-classification advantage they may show would be confounded with prediction refinement
and we therefore exclude them from Table~\ref{tab:selective}.

\clearpage

\end{document}